\newcommand{\model}{Argos\xspace}
\newmdenv[
  backgroundcolor=gray!20,
  linecolor=gray!60,
  roundcorner=6pt,
  innerleftmargin=10pt,
  innerrightmargin=10pt,
  innertopmargin=8pt,
  innerbottommargin=8pt
]{grayblock}
\newcommand{\spatialresultrow}[5]{#1 & #2 & #3 & #4 & #5 \\}
\newcommand{\ablateresultrow}[9]{#1 & #2 & #3 & #4 & #5 & #6 & #7 & #8 & #9\\}
\newcommand{\resultrow}[4]{#1 & #2 & #3 & #4\\}
\newtcolorbox{AIBox}[1]{%
  enhanced,
  colback=white,            
  colframe=black,           
  boxrule=0.8pt,
  arc=2mm,                  
  outer arc=2mm,
  left=4mm,right=4mm,top=1mm,bottom=1mm,
  before skip=12pt,after skip=12pt,
  fonttitle=\bfseries\color{white},
  title={#1},
  attach boxed title to top center={yshift=-2mm},
  boxed title style={       
    colback=black,
    colframe=black,
    boxrule=0pt,
    arc=2mm,
    outer arc=2mm,
    left=6mm,right=6mm,top=1mm,bottom=1mm,
  },
}
\begin{document}

\title{Multimodal Reinforcement Learning with Adaptive Verifier for AI Agents} 

\titlerunning{Argos}

\author{
Reuben Tan$^{1}$ \quad
Baolin Peng$^{1}$ \quad
Zhengyuan Yang$^{1}$ \quad
Hao Cheng$^{1}$ \quad
Oier Mees$^{1}$ \\
Theodore Zhao$^{1}$ \quad
Andrea Tupini$^{1}$ \quad
Isar Meijer$^{1}$ \quad
Qianhui Wu$^{1}$ \\
Yuncong Yang$^{2}$ \quad
Lars Liden$^{1}$ \quad
Yu Gu$^{1}$ \quad
Sheng Zhang$^{1}$ \quad
Xiaodong Liu$^{1}$ \quad
Lijuan Wang$^{1}$ \quad
Marc Pollefeys$^{1,3}$ \quad
Yong Jae Lee$^{4}$ \quad
Jianfeng Gao$^{1}$ \\
$^{1}$Microsoft Research \quad
$^{2}$UMass Amherst \quad
$^{3}$ETH Zurich \quad
$^{4}$UW--Madison
}

\authorrunning{F.~Author et al.}


\maketitle

\begin{center}
\includegraphics[width=\linewidth]{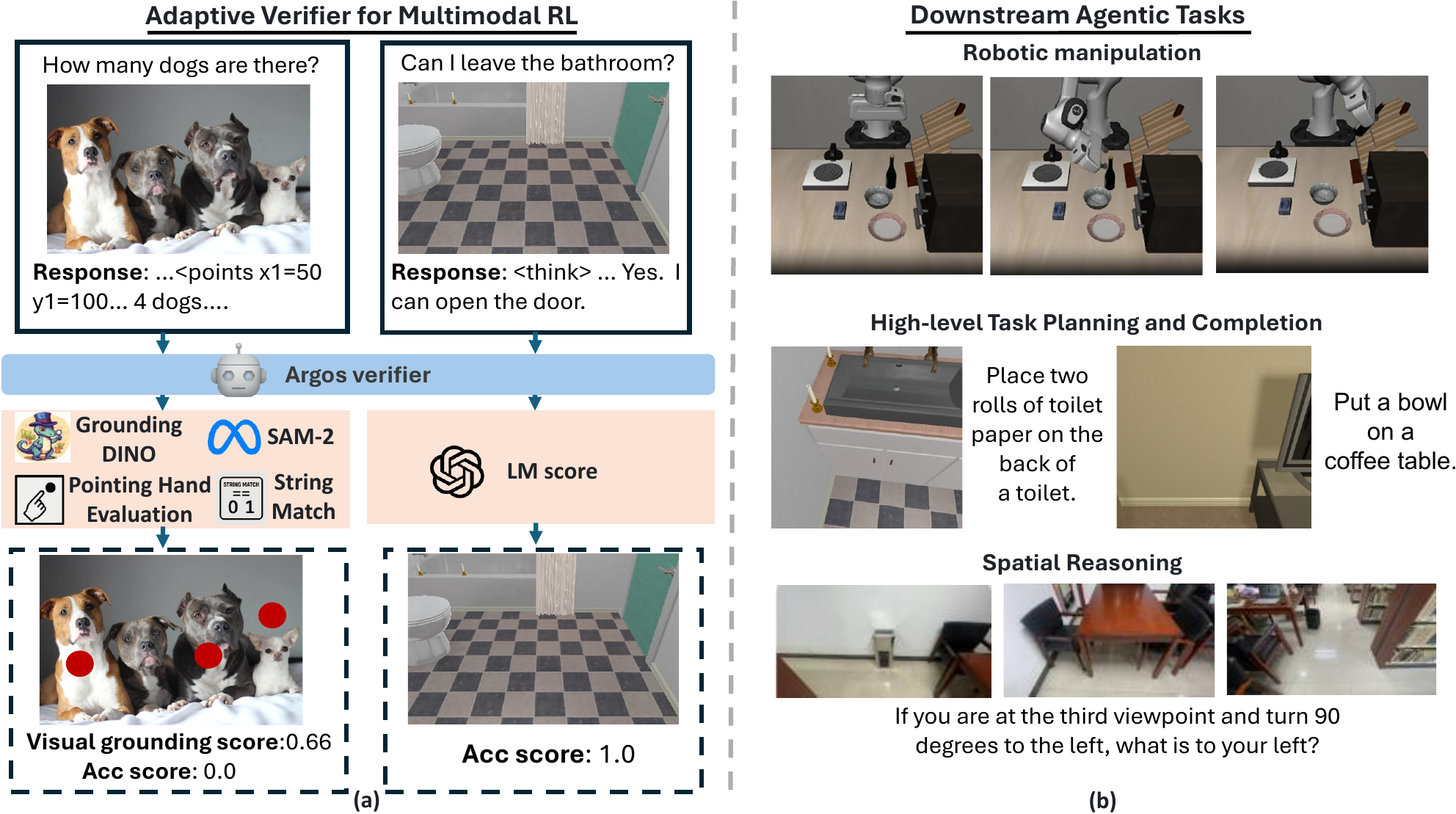}
        \vspace{-0.2in}
   \captionof{figure}{\textbf{Multimodal RL with our \model adaptive verifier.} We propose to train agentic foundation models using an adaptive verifier \model that dynamically selects different scoring tools based on the training sample during the RL stage. Then, we evaluate the resulting model on multiple agentic benchmarks including embodied task planning and completion as well as spatial reasoning.
   } 
\label{fig:motivation_fig}
\end{center}

\begin{abstract}
  Agentic reasoning models trained with multimodal reinforcement learning (MMRL) have become increasingly capable, yet they are almost universally optimized using sparse, outcome-based rewards computed based on the final answers. Richer rewards computed from the reasoning tokens can improve learning significantly by providing more fine-grained guidance. However, it is challenging to compute more informative rewards in MMRL beyond those based on outcomes since different samples may require different scoring functions and teacher models may provide noisy reward signals too. In this paper, we introduce the \model(\underline{A}daptive \underline{R}eward for \underline{G}rounded \& \underline{O}bjective \underline{S}coring), a principled verification framework with adaptive tool routing to train multimodal  reasoning models for agentic tasks. For each sample, \model selects from a pool of teacher-model derived and rule-based scoring functions to simultaneously evaluate: (i) final response accuracy, (ii) spatiotemporal localization of referred entities and actions, and (iii) the quality of the reasoning process. We find that by leveraging our adaptive verifier across both SFT data curation and RL training, our model achieves state-of-the-art results across multiple agentic tasks such as spatial reasoning, visual hallucination as well as robotics and embodied AI benchmarks. Critically, we demonstrate that just relying on SFT post-training on highly curated reasoning data is insufficient, as agents invariably collapse to ungrounded solutions during RL without our online verification. More importantly, we also show that adaptive composition of different scoring functions during the verification process can help to reduce reward-hacking in MMRL. 
\end{abstract}    
\vspace{-15pt}
\section{Introduction}
\label{sec:intro}
\vspace{-3pt}

Intelligent beings seamlessly integrate perception, language, and action. With a goal in mind, they first observe a scene, interpret it in context, and then formulate and execute a plan. To emulate this ability, researchers have been shifting from static perception to agentic multimodal models that can reason about observations, plan and use tools~\cite{durante2024agent,dong2024tool,wang2024genartist,li2025perception,kumar2024mmctagent}. Such agentic models for \emph{multimodal reasoning}  have wide-ranging applications, including AI agents that collaborate with humans, interactive GUI~\cite{luo2025gui} and tool-using~\cite{tian2025ego} assistants, and systems such as robots and self-driving cars. In particular, RL, including the recent GRPO~\cite{shao2024deepseekmath} and DAPO~\cite{yu2025dapo} algorithms, has been crucial in driving this progress. Verifiable outcome rewards help align such reasoning models with downstream tasks. However, using only outcome rewards provides limited guidance on the quality of the reasoning process and can cause hallucination~\cite{he2025breaking}. Motivated by these limitations, a natural direction is \emph{visually grounded reasoning}, which encourages the model to explicitly reference visual evidence such as point coordinates while it reasons. Existing approaches~\cite{sarch2025grounded,ni2025point} have explored this idea primarily through the curation of grounded SFT annotations. Crucially, we observe empirically that grounded SFT alone is insufficient: during multimodal RL (MMRL), models can still collapse to fluent but ungrounded responses when the grounding signals are not explicitly verified, especially under sparse outcome-based rewards. This motivates a training-time verifier that can reliably enforce consistency between perception and language and curb reward hacking.


While RL for text-only reasoning has been extensively studied, approaches for computing richer rewards in MMRL remain comparatively under-explored and introduce unique challenges, such as selecting appropriate scoring functions per sample, mitigating noisy signals from teacher models, and maintaining consistency between perception and language throughout the reasoning process. To address the above-mentioned challenges, we introduce Adaptive Reward for Grounded and Objective Scoring (\model) verifier (Figure~\ref{fig:motivation_fig}a), which dynamically selects from a set of teacher models and rule-based scoring functions like string matching to evaluate the response of each sample across spatial grounding, reasoning quality and accuracy.
Our proposed verifier jointly evaluates final answer accuracy, spatiotemporal grounding and reasoning quality. \model is a general framework and can be extended to include increasing capable task-specific models. Finally, we compute an  aggregated final reward that is gated by correct outcomes but enriched with intermediate reward terms. Additionally, we propose an approach based on overlaying explicit 2D point coordinates on images and video frames, that leverages the OCR capability of a teacher model to generate reasoning data that are visually grounded in pixels across space and time. 
In addition to MMRL, we also use \model during our data curation process to filter out low-quality rollouts from the teacher model for the SFT stage. Crucially, our \model verifier helps curb reward hacking in MMRL. \model is also related to research on tool-augmented agents~\cite{schick2023toolformer} but those methods generally employ tools for inference-time problem solving, leaving the intermediate reasoning and visual evidence under-verified during training. In contrast, \model helps to convert multiple noisy reward signals into a final verifiable reward.  


From a learning perspective, \model reframes MMRL as multi-objective optimization with multiple noisy teacher rewards. We provide a brief theoretical justification, along with a detailed analysis in the supplemental, to provide an intuition on why adaptive and multi-objective reward verification can help the policy model to learn better by guiding it towards global Pareto optimality. The modular architecture of \model enables it to extend naturally to new modalities and objectives. As task-specific teacher models improve, our \model has the potential to compute more informative reward signals, enabling the training of more capable and robust multimodal reasoning agents. In conclusion, we summarize our contributions as follows:
\begin{enumerate}
    \item We propose \model, that is used during data curation to filter out low-quality annotations and to provide aggregated and verifiable rewards during MMRL. Also, we introduce a novel data curation pipeline for generating reasoning traces that are visually grounded in space and time.

    \item We demonstrate the effectiveness of \model in achieving state-of-the-art results on multiple agentic benchmarks (Figure~\ref{fig:motivation_fig}b)  against similarly-sized models, including spatial intelligence reasoning, multimodal understanding, embodied task completion and robotics.

    \item To the best of our knowledge, we are the first work to introduce 
    a principled verification framework with adaptive tool selection for MMRL.
\end{enumerate}

\section{Related Work}
\label{sec:related}

\paragraph{Multimodal understanding and reasoning.}
The AI research community has seen rapid progress in large multimodal models (LMMs) that are able to process data from different modalities such as visual and audio information and generate reasonable responses. Before the advances in autoregressive language models, seminal vision-language models such as CLIP~\cite{radford2021learning}, ALIGN~\cite{jia2021scaling} and BLIP~\cite{li2022blip} are trained on web-scale image and language datasets, often with contrastive learning~\cite{oord2018representation}. Building upon the advances in autoregressive language models and insights on instruction tuning~\cite{chung2024scaling}, models including but not limited to Flamingo~\cite{alayrac2022flamingo}, BLIP-2/3~\cite{li2023blip,xue2025blip}, LLaVA~\cite{liu2023visual} and Mini-GPT4~\cite{zhu2023minigpt} have combined the visual capabilities of pretrained image encoders and LLMs, leveraging the latter's ability for open-ended question answering, prompting and reasoning. 
Subsequent work broadens both scope and granularity, including region-level LMMs that operate at finer spatial resolutions~\cite{guo2024regiongpt} and video-centric LMMs designed for temporal reasoning~\cite{zhang2023video,tan2024koala}. In tandem, increasingly comprehensive benchmarks have been introduced to evaluate these capabilities across tasks and modalities~\cite{liu2024mmbench,ying2024mmt}. Beyond static perception, recent works also leveraged the success of the DeepSeek-R1~\cite{guo2025deepseek} model with its proposed GRPO algorithm to train multimodal models that are capable of reasoning about images, videos and even audio~\cite{feng2025video, fan2025grit}. These works are highly relevant to earlier approaches that apply multimodal CoT such as (i) prompt-based strategies for zero/few-shot settings~\cite{zhou2022least,zhang2023multimodal}, (ii) plan-based approaches that iteratively refine intermediate thoughts and evidence~\cite{yao2023react}, and (iii) learning-based techniques that directly train models to produce rationales from paired inputs and targets~\cite{wang2022self,ye2023mplug}.

\vspace{-10pt}
\paragraph{Reinforcement Learning for reasoning and planning.}
Multimodal planning over long horizons aims to equip AI systems with the capacity to integrate and reason over streams of multimodal inputs and observations, such as language, vision, and audio among others, across extended time horizons to complete complex, goal-driven tasks in real or simulated environments~\cite{huang2023voxposer,driess2023palm,chen2022transdreamer,huang25mslmaps, mees2022calvin}. 
Recent models combine vision-language-action foundations with planning capabilities to execute open-ended tasks such as robotic control~\cite{sermanet2024robovqa,jones24fuse} and embodied navigation~\cite{yang2025embodiedbench,huang23vlmaps}, with a particular emphasis on hierarchical planning~\cite{huang2023voxposer,mees23hulc2,huang25mslmaps}. A key component here is RL, which allows agents to learn robust and generalizable policies~\cite{hafner2023mastering,hafner2025training,rosete2022corl,radosavovic2024real,nakamoto2024steering}. Others use RL to learn to use tools including but not limited to external APIs for computation~\cite{yao2023react} or predefined tools such as cropping and even using LMMs in the operation, as part of their multimodal reasoning loops~\cite{driess2023palm,myers2024policy}. Lastly, advanced models incorporate RL fine-tuning on top of pretrained LMM backbones~\cite{huang2023voxposer,hafner2023mastering,hafner2025training,driess2023palm,chen2024vision}, using environment rewards to align long-term plans with task success while retaining interpretability through intermediate subgoal generation or trajectory imagination~\cite{park2023generative,hafner2023mastering,hafner2025training,zheng2024tracevla,hatch2024videoglue}. Our work builds on the idea of tool usage by using teacher models but adaptively selects them to compute multi-objective rewards instead. 
\section{Approach}
\label{sec:approach}

\begin{figure*}[h]
\begin{center}
\includegraphics[width=0.95\linewidth]{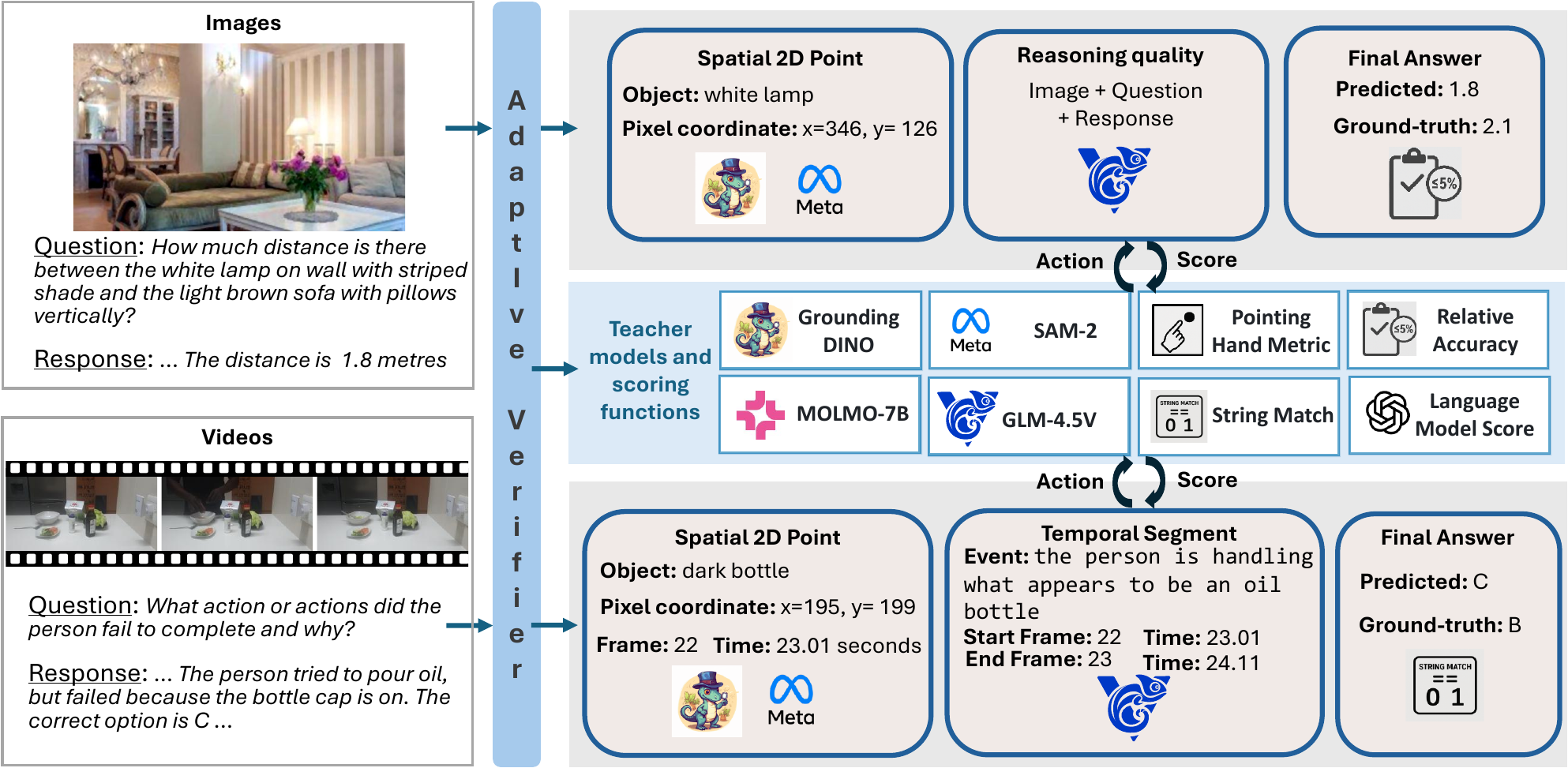}
\end{center}
\vspace{-10pt}
   \caption{\textbf{Verification process.} We use the same set of scoring functions for both images and videos. Each response is first parsed to extract information about generated 2D points, temporal segments, reasoning text and answer. Then, the adaptive verifier dynamically decides what scoring functions to call based on the extracted information. Finally, we aggregate the scores using a gated aggregation function.} 
\label{fig:agentic_verifier_scoring_process}
\vspace{-15pt}
\end{figure*} 

We present a principled and general framework for computing more informative rewards to verify the generated responses of our reasoning model by leveraging the \model verifier.\model (Figure~\ref{fig:agentic_verifier_scoring_process}) is an LMM agent that selects from a set of $K$ scoring functions to compute an aggregated reward score for each training sample. We note that the final reward computed by \model is not a process reward as it is computed at the end of the entire response. However, our reward is more informative than the conventional outcome reward since it aggregates multiple reward signals of intermediate reasoning steps such as their visual grounding accuracy. For a question and visual input, we use our multimodal reasoning model, denoted as $\pi$, to generate a response. Each response consists of a reasoning trace and final predicted answer. The adaptive verifier accepts the visual input $v$, question $q$, reasoning trace $r$, and predicted answer $\hat{y}$. Depending on the training sample, \model adaptively composes a multi-objective reward process by selecting relevant tools to score the response. Finally, it leverages a gated aggregation function to compute a final reward score. 

\subsection{Adaptive Verifier}

\textbf{How adaptive selection happens}: To begin, we parse model outputs to extract referenced spatiotemporal points and final answers from a single response string. First, the parser isolates the reasoning and answer regions: it extracts the contents of $<\text{think}></\text{think}>$ as the reasoning text, and the final answer from the $<|\text{begin\_of\_box}|><|\text{end\_of\_box}|>$ tags. In parallel, it extracts 2D point annotations from the response such as structured <points> tags with x and y coordinates, as well as temporal segments if they exist. To preserve semantic ordering, the parser records the left-to-right appearance order of all point mentions in the response. It first collects <points> spans, masks them to avoid double counting, then merges all detections by start index.



Based on these extracted fields, we prompt \model with a description of the available scoring functions and their respective limitations to allow it to select suitable functions on a per-sample basis. We provide the system prompt used in the supplemental. Then, it decides which scoring functions to call (e.g., spatiotemporal grounding tools only when points or video segments are referenced and accuracy evaluators chosen based on answer format). 

\paragraph{Spatial reward.} Our key intuition behind generating reasoning thoughts that are grounded in both space and time is that it helps to alleviate the issue of hallucination of referenced objects in reasoning traces. For images, we primarily evaluate the spatial grounding accuracy. Given the set of extracted 2D spatial points mentioned in the predicted response $P_{\text{spatial}}$, we compute the score for each point, which is also associated with a predicted object label $\hat{o}$. Our spatial reward is computed in two stages. To begin, we extract a set of $N$ generated 2D points $P = \{ (x_1, y_1, o_1), \cdot \cdot \cdot, (x_N, y_N, o_N)\}$ from a rollout generated by our model, where $x_i$, $y_i$ and $o_i$ denote the x and y coordinates, as well as the corresponding object label of the $i$-th point, respectively. For the $i$-th detected point, we use an open-vocabulary object detection model $g_\theta$ to extract a pseudo ground-truth bounding box $b^*_i$ based on the predicted object: $b^*_i = g_\theta(o_i).$
To further refine the boundaries of the object enclosed within $b^*_i$, we use a segmentation teacher model $h_\phi$ to extract a fine-grained segmentation map:
\begin{equation}
    M_i = h_\phi(b^*_i), M^*_i \in \mathbb{R}^{H \times W},
\end{equation} where $M_i$, $H$, $W$ denote the extracted segmentation mask, height and width of the input image, respectively. We compute the spatial grounding score for the $i$-th point as:
$s_i = \mathbf{1}\!\left[M_i(x_i,y_i)=1\right]$. 

In the case of images that contain synthetic visual content such as bar charts or maps, the open-vocabulary object detection model may not work well. Thus, we use another pointing model $f_\vartheta$ to generate 2D points before passing them into the segmentation teacher model $h_\phi$. Finally, we compute the spatial grounding reward $R_{\text{spatial}}$ as follows:
\vspace{-15pt}
\begin{equation}
    R_{\text{spatial}} = \frac{1}{N}\sum_{i=1}^N s_i .
\end{equation}
\vspace{-20pt}
\paragraph{Temporal rewards.} The scoring functions used to compute the spatial reward term can be easily extended to videos. When a reasoning trace references an action or event in the video that span multiple frames, it is also important to verify its existence in the video. Given a video $V$ consisting of $N_v$ frames and a question, we use an LLM to extract both frame-level observations $F$ and segment-level events or actions $E$ that span multiple frames from the generated response. We provide the query prompt used for extraction in the supplemental. 

We define $F$ as a set of $N_F$ frame-level observations $F = \{(t_1, x_1, y_1, o_1), \cdot \cdot \cdot, (t_{N_F}, x_{N_f}, y_{N_f}, o_{N_F}) \}$, where $t_i$ can either denote the relevant frame or its corresponding timestamp. In this setting, the identified frame is analogous to an image. Thus, we leverage the spatial grounding models $f_\vartheta$, $g_\theta$ and $h_\phi$ described above to compute the set of frame-level scores $S_f$.

We consider a set of segment-level events $E = \{e_i\}_{i=1}^N$, where the $i$-th event is represented as the tuple $(t_i^{\mathrm{start}},\, t_i^{\mathrm{end}},\, d_i)$ with $t_i^{\mathrm{start}}$ and $t_i^{\mathrm{end}}$ denoting the start and end times (or frame indices) of the segment, and $d_i$ the event description. Then, we query a powerful teacher reasoning model $T$ to evaluate the visual–semantic accuracy between $d_i$ and the corresponding video segment $V_{\,t_i^{\mathrm{start}}:t_i^{\mathrm{end}}}$, and return a binary score: $s_i = \text{video\_score}\!\big(d_i, V_{t_i^{\mathrm{start}}:t_i^{\mathrm{end}}}\big) \in \{0,1\}$.
 Here, $\text{video\_score}$ is the function parameterized by the reasoning teacher model. Finally, we compute the final video grounding score by computing the within-set means of the set of event scores $S_e$ and $S_f$ and defining the final score as the (unweighted) average of these means.

\vspace{-10pt}
\paragraph{Reasoning quality reward.}
Beyond evaluating the intermediate grounding of referenced objects and actions, we also evaluate the logical consistency between the generated reasoning trace and the final answer $\hat{y}$. In some cases, the policy model may generate reasonable reasoning traces but still predict the wrong answer at the end. We use a larger teacher model to compute a reasoning-quality reward as its conditional probability of the predicted response $y$ given the question $q$, reasoning trace $r$ and visual input $v$:
\begin{equation}
  R_{\text{reasoning}} \;=\; P\big(\mathrm{\hat{y}}\mid q, r, v).
\end{equation}
Intuitively, higher values indicate stronger consistency between the reasoning and the answer in the context of the question and visual input. 

\vspace{-10pt}
\paragraph{Outcome rewards.} To compute the final outcome rewards using the ground-truth answer $y^\star$, we use a combination of a language model as well as heuristic-based functions depending on the type of the question and expected answer format.

\noindent\emph{(i) Exact string match.}
For multiple-choice questions and those that require short phrases as an answer, we compute: $R_{\text{acc}} = \mathbf{1}\{\hat{y}= y^\star \}$.
\emph{(ii) Relative numerical accuracy with 5\% tolerance.}
When both answers are float numbers, we compute:
\vspace{-10pt}
\begin{equation}\label{eq:racc-float}
R_{\text{acc}}
=\mathbf{1}\!\left\{
\mathrm{relerr}(\hat{y},y^\star)\le 0.05
\right\}\enspace \text{where}
\enspace
\mathrm{relerr}(\hat{y},y^\star)=\frac{\lvert \hat{y}-y^\star\rvert}{\max(\lvert y^\star\rvert,\,1)}.
\vspace{-5pt}
\end{equation}
\emph{(iii) Binary semantic accuracy via a language model.}
We query a capable language model to assign $R_{\text{acc}}$ a value of 1 if $\hat{y}$ is semantically equivalent to $y^\star$ without any contradictions and 0 otherwise.

\paragraph{Aggregation function.} 
Finally, we aggregate the rewards from selected scoring functions using a gated scoring function to prevent potentially noisy rewards from biasing the final answer away from the correct result. We formulate the gated function as follows:
\begin{equation}\label{eq-reward}
    R_{\text{final}} = 
    \begin{cases}
        R_{\text{acc}}, & R_{\text{acc}} < \tau, \\
        \dfrac{w_A R_{\text{acc}} + w_G R_{\text{spatial}} + w_R R_{\text{reasoning}}}{w_A + w_G + w_R}, & R_{\text{acc}} \ge \tau,
    \end{cases}
\end{equation}
where $w_A$, $w_G$, $w_R$ and $\tau$ denote the weight terms for the outcome, visual grounding and reasoning quality rewards as well as the gating threshold, respectively.


While the verifier framework's modularity allows for diverse feedback, its benefit lies in the mathematical convergence when aggregating weak signals. We provide rigorous proof that composing $m$ complementary reward estimators helps suppress noise, guiding the policy towards $\delta$-Pareto-optimal solutions, meaning the model improves grounding and reasoning without sacrificing final accuracy.

\subsection{GRPO training}
Within each group of rollout reward values for the $i$-th training sample, we compute the advantage $A_i$ over the individual $j$-th reward values: $A_i = \frac{R_i - \operatorname{mean}\!\left(\{R_j\}\right)}
           {\operatorname{std}\!\left(\{R_j\}\right)}.$
Using the aggregated reward terms, we update our policy model $\pi_\theta$ using the GRPO formulation~\cite{shao2024deepseekmath}:
\vspace{-10pt}
\begin{equation}
\vspace{-10pt}
\begin{aligned}
\mathcal{J}_{\mathrm{GRPO}}(\theta)
&= \mathbb{E}_{q,\{\hat{y}_i\}}\Bigg[
\frac{1}{G}\sum_{i=1}^{G}
\min\!\bigg(
\frac{\pi_{\theta}(\hat{y}_i\mid q)}{\pi_{\theta_{\mathrm{old}}}(\hat{y}_i\mid q)}\,A_i,\;
\\[-2pt]
&\hspace{-5em}\operatorname{clip}\!\left(
\frac{\pi_{\theta}(\hat{y}_i\mid q)}{\pi_{\theta_{\mathrm{old}}}(\hat{y}_i\mid q)},\,
1-\epsilon,\,1+\epsilon
\right) A_i
\bigg)
-\beta\, D_{\mathrm{KL}}\!\left(\pi_{\theta}\,\|\,\pi_{\mathrm{ref}}\right)
\Bigg].
\end{aligned}
\end{equation}

\section{Data Curation}
\label{sec:data} 

\begin{figure*}[t!]
\begin{center}
\includegraphics[width=0.95\linewidth]{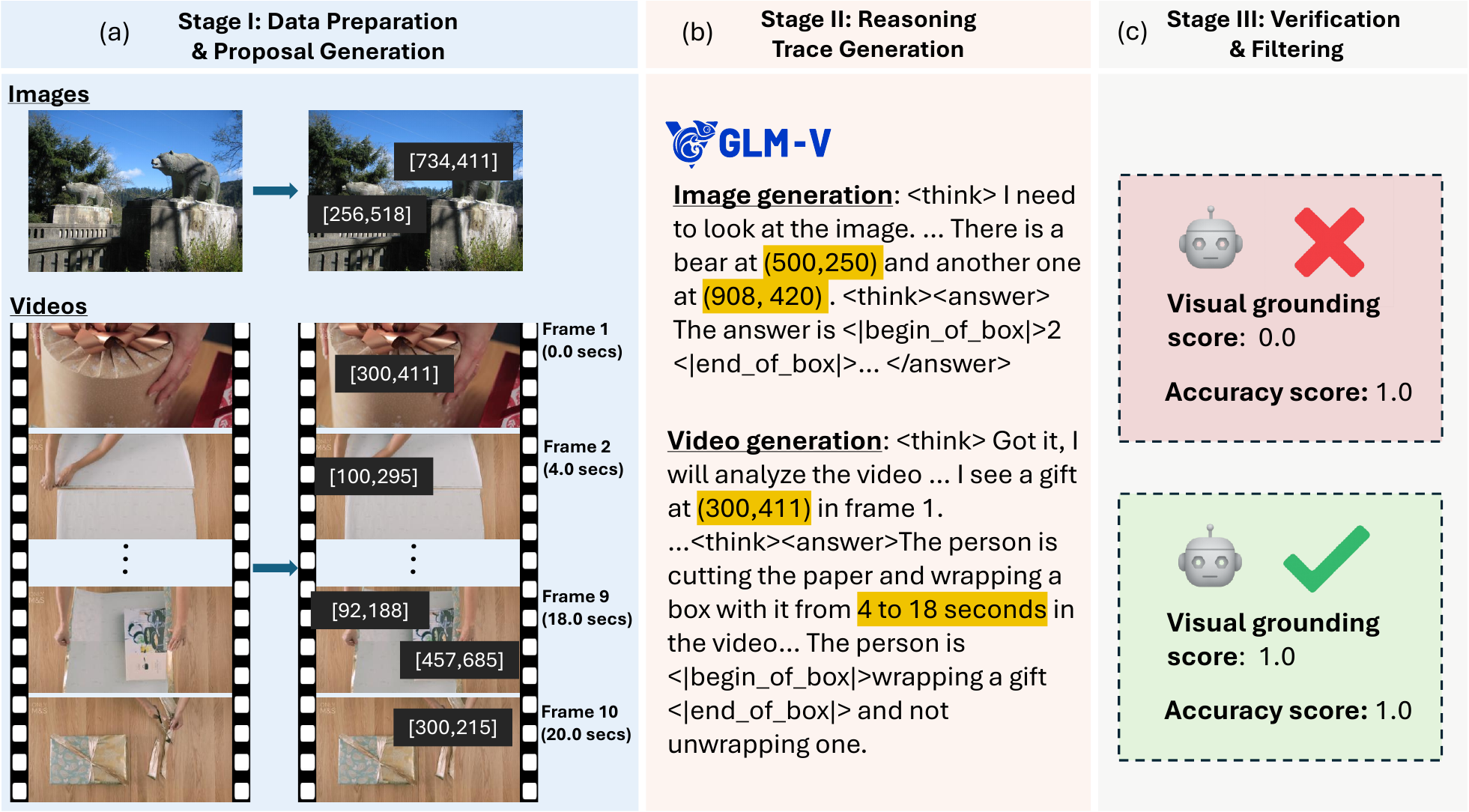}
\end{center}
\vspace{-10pt}
   \caption{\textbf{Grounded reasoning generation pipeline.} \textbf{(a) Stage I:} We extract object, action and event proposals such as 2D boxes for images and video frames as well as temporal segments for videos. 
\textbf{(b) Stage II.} We use the overlaid images and video frames to prompt a pretrained LMM to generate grounded reasoning traces that explicitly refer to these points. For videos, we also include the frame numbers and their timestamps in the query.  
\textbf{(c) Stage III.} \model adaptively scores each trace using multi-objective rewards (e.g., visual grounding and answer accuracy) and filters out samples with low-quality generations. In the image example, the sample is filtered out due to low visual grounding accuracy despite predicting the correct answer.} 
\label{fig:data_curation}
\vspace{-15pt}
\end{figure*} 

We aim to generate training data for SFT cold-start to help the base model learn to reason. We illustrate the main stages of our curation pipeline for generating reasoning traces that are visually grounded across space and time in Figure~\ref{fig:data_curation}. We primarily use the highly-capable GLM-4.1V~\cite{hong2025glm} model to generate the reasoning traces although we also use a smaller amount of generations from a proprietary model from Gemini-2.5 Flash~\cite{gemini} to augment our SFT dataset due to computational constraints.
We discuss the three main stages of our entire generation pipeline at a high-level in this section and provide more specific details on each step and examples of our curated SFT training samples in the supplemental.

\subsection{Data preparation and proposal generation} \label{sec:data_proposal}
While the GLM-4.1V teacher model can perform object localization well, it is unable to perform grounded reasoning naturally given a question and visual input. Given the visual input, question and ground-truth answer, we begin by extracting information about relevant objects, actions and events before extracting their spatial 2D and temporal positions. Based on our observation that it performs well on the task of OCR, we first use the Molmo-7B~\cite{deitke2024molmo} model to extract 2D points of relevant objects. As shown in Figure~\ref{fig:data_curation}(a), we overlay the spatial 2D points on the image or sampled video frames. Additionally, for the temporal dimension in videos, we apply a similar concept by providing explicit timestamps using both frame numbers and time in seconds. Each frame is mapped to its accurate timestamp, which is computed based on its sampling FPS. We set a maximum limit of 32 frames in experiments. In our query, we provide the input overlaid frames along with their corresponding timestamps.

\subsection{Reasoning trace generation}\label{sec:trace_gen}
After the object locations and event timestamps have been extracted, we query the GLM-4.1V model with the overlaid images and video frames (Figure~\ref{fig:data_curation}(b)) to generate reasoning traces that contain 2D points when referring to specific objects. In the case of images, we prompt the GLM-4.1V model to primarily use the visual information in the original image before referring to the coordinates on the overlaid image for reference to clarify ambiguous objects or referring expressions in its response. For videos, we prompt the teacher model to explicitly refer to 2D points in frames and multi-frame events in specific formats such as ``(x,y) in frame F (T seconds)'' as well as ``from $t_{\text{start}}$ to $t_{\text{seconds}}$''. Note that we do not include the original video frames due to computational constraints. For each sample, we generate eight possible rollouts. 

\subsection{Verification and filtering} 
Despite our curation pipeline for visually grounded reasoning traces, state-of-the-art reasoning models still produce unreliable rollouts with high frequency. For example, our yield rate was around 3.1\%. Thus, in addition to using \model to provide adaptive and dense rewards for multimodal reinforcement learning, we also use it to evaluate the generated rollouts for the training samples and filter out samples if the maximum score over all rollouts fall below a threshold value (Figure~\ref{fig:data_curation}(c)). We extract generated 2D points using regular expression and reformat them into the format: \colorbox{lightgray}{$<$point x1="x" y1="y" alt="object"$>$"object"$<$/point$>$}. Similarly, we also extract the templated timestamps in the video reasoning traces and replace them with reformatted natural language phrases. This filtering step ensures that the data used for SFT consists predominantly of visually grounded and semantically accurate reasoning examples.


\section{Experiments}
\label{sec:experiments}
In this section, we evaluate our trained model post-SFT and RL on multiple agentic benchmarks across different domains under the zero-shot setting. A key skill for multimodal AI agents to interact with their physical worlds is spatial intelligence and the ability to reason about different viewpoint perspectives and motion. We begin by evaluating on multiple vision-centric and spatial reasoning benchmarks. Next, we also examine the benefits of learning to perform grounded reasoning with \model on reducing visual hallucination since agents have to make confident and accurate predictions. Finally, we also evaluate on fine-grained robotic manipulation and high-level task planning.

\vspace{-10pt}
\paragraph{Implementation details.} We build our approach off the
publicly available Qwen2.5-VL 7B~\cite{bai2025qwen2} model and train on our curated dataset for SFT (as discussed in Section~\ref{sec:data}) and a separate and non-overlapping subset of the same dataset for the RL training with GRPO. During the MMRL stage, we set $W_A$, $W_G$,  as well as $\tau$ as 1 and $W_R$ as 0.1. We provide further training details in the supplemental.

\subsection{Spatial Reasoning Evaluations}
\begin{table}[ht]
\vspace{-20pt}
\centering
\footnotesize
\setlength{\tabcolsep}{1.5pt}
\begin{tabular}{lcccc}
\toprule
\textbf{Model} & \textbf{BLINK} & \textbf{MindCube-t} & \textbf{CV-Bench} & \textbf{CV-Bench (3D)} \\
\midrule
\spatialresultrow{Qwen2.5VL 7B~\cite{bai2025qwen2}}{54.4}{34.9}{77.0}{77.9}
\spatialresultrow{Qwen2.5VL 7B (CoT)}{53.5}{33.1}{75.6}{76.6}
\spatialresultrow{Video-R1 (SFT)}{52.7}{34.2}{75.1}{75.6}
\spatialresultrow{Video-R1 (RL)~\cite{feng2025video}}{49.0}{31.9}{60.2}{57.2}
\spatialresultrow{ViGoRL (RL)~\cite{sarch2025grounded}}{53.0}{36.4}{\textbf{80.5}}{81.0}
\spatialresultrow{\model (Ours)}{\textbf{56.0}}{\textbf{39.6}}{78.2}{\textbf{82.0}}
\bottomrule
\end{tabular}
\caption{Results on spatial reasoning benchmarks.}
\label{tab:spatial_reasoning_results}
\vspace{-25pt}
\end{table}

We report results of \model on multiple spatial reasoning benchmarks in Table~\ref{tab:spatial_reasoning_results}. For all datasets, we use accuracy (\%) as the metric.

\vspace{-10pt}
\paragraph{BLINK.} The BLINK dataset~\cite{fu2024blink} contains 14 visual perception tasks that include spatial and multiview reasoning, as well as functional correspondence. \model achieves a performance gain of over $12\%$ over the baseline Qwen2.5-VL and even outperforms Video-R1~\cite{feng2025video}, which was trained on around 160K more training samples than ours. These consistent improvements suggest that more accurate grounded reasoning may help close the gap with much larger models. 

\vspace{-10pt}
\paragraph{MindCube.} The MindCube~\cite{yin2025spatial} benchmark focuses on evaluating the ability of LMMs to perform spatial reasoning by reconstructing spatial mental models using partial observations and dynamic viewpoints. We evaluate on the tiny split which contains around 1K evaluation samples. Interestingly, CoT prompting actually hurts the performance of the base Qwen2.5VL model. In contrast, our model also outperforms the base model by over 5\%.

\vspace{-10pt}
\paragraph{CV-Bench.} CV-Bench~\cite{tong2024cambrian} is a vision-centric benchmark that assesses 2D understanding through spatial relationships and object counting, and 3D understanding through depth ordering and relative distance.
Consistent with results on other datasets, our model trained with the proposed \model gains a significant improvement over the state-of-the-art SOTA Video-R1 variants.
It is worth-noting that training with visually grounded reasoning traces in 2D images enhances the resulting model's generalization capabilities to 3D visual understanding. This appears to be corroborated by performance gains achieved by our model on embodied AI tasks, that we discuss in later sections.

\begin{table}[ht]
\vspace{-15pt}
\centering
\footnotesize
\setlength{\tabcolsep}{3pt}
\begin{tabular}{lccc}
\toprule
\textbf{Model} & \textbf{CounterCurate} & \textbf{HallusionBench} & \textbf{SugarCrepe} \\
\midrule
\resultrow{Qwen2.5VL-7B~\cite{bai2025qwen2}}{61.4}{45.8}{85.2}
\resultrow{Qwen2.5VL-7B (CoT)}{60.4}{45.0}{83.2}
\resultrow{Video-R1 (SFT)}{60.6}{40.1}{83.3}
\resultrow{Video-R1 (RL)}{63.6}{41.4}{79.9}
\resultrow{ViGoRL (RL)~\cite{sarch2025grounded}}{67.9}{46.1}{79.1}
\resultrow{\model (Ours)}{\textbf{85.3}}{\textbf{49.7}}{\textbf{86.4}}
\bottomrule
\end{tabular}
\caption{Results on visual hallucination benchmarks. }
\label{tab:hallucination_results}
\vspace{-15pt}
\end{table}

\begin{table}[t]
\centering
\resizebox{\columnwidth}{!}{
\begin{tabular}{lccccccc}
\hline
Model  & Base & Common & Complex & Visual & Spatial & Long & Avg\\
\hline
Qwen2.5-VL-7B~\cite{bai2025qwen2}       & 4.0 & 3.3 & 2.0 & 0.0 & 0.7 & 1.3 & 1.9\\
Qwen2.5-VL-7B (CoT)      & 6.0 & 9.3 & 7.3 & 5.3 & 4.7 & 0.7 & 5.6\\
Video-R1~\cite{feng2025video}    & 16.7 & 11.3 & 18.0 & 8.0 & 5.3 & 0.7 & 10.0\\
ViGoRL ~\cite{sarch2025grounded}     & 14.7 & 11.3 & 16.0 & 8.7 & 4.0 & 0.7 & 9.2\\
\model (Ours)   & \textbf{24.7} & \textbf{18.0} & \textbf{27.3} & \textbf{8.7} & \textbf{8.7} & \textbf{0.7} & \textbf{14.7}\\
\hline
\end{tabular}
}
\caption{Results on EB-Alfred.}
\label{tab:ebalfred-results}
\vspace{-20pt}
\end{table}

\begin{table}[t]
\centering
\resizebox{\columnwidth}{!}{
\begin{tabular}{lccccccc}
\hline
Model  & Base & Common & Complex & Visual & Spatial & Long & Avg\\
\hline
Qwen2.5-VL-7B~\cite{bai2025qwen2}       & 23.3 & 5.3 & 10.7 & 6.7 & 8.0 & 0.0 & 9.0\\
Qwen2.5-VL-7B (CoT)       & 30.7 & 6.7 & 13.3 & 11.3 & 12.0 & 1.3 & 12.6\\
Video-R1~\cite{feng2025video}     & 46.7 & 6.0 & 15.3 & 9.3 & 16.7 & 3.3 & 16.2\\
ViGoRL~\cite{sarch2025grounded}     & \textbf{47.3} & 10.7 & 19.3 & \textbf{18.0} & \textbf{18.0} & 2.7 & 19.3\\
\model (Ours)   & 45.3 & \textbf{12.0}  & \textbf{24.0}  & 16.0 & 17.3 & \textbf{9.3} & \textbf{20.7}\\
\hline
\end{tabular}
}
\caption{Results on EB-Habitat.}
\label{tab:ebhabitat}
\vspace{-25pt}
\end{table}

\subsection{Hallucination Reasoning}
To evaluate the effectiveness of performing grounded reasoning, we compare \model against baseline approaches on three evaluation benchmarks: CounterCurate~\cite{zhang2024countercurate}, HallusionBench~\cite{guan2024hallusionbench} and SugarCrepe~\cite{hsieh2023sugarcrepe}. The results are summarized in Table~\ref{tab:hallucination_results}. These benchmarks are aimed at evaluating the capabilities of LMMs to reduce hallucination of visual concepts in their generated responses. First, we observe that our \model achieves a significant relative performance gain of more than 20\% over the Qwen2.5VL-7B base model on CounterCurate. We also see consistent improvements obtained by SOTA multimodal reasoning models, with Video-R1 outperforming the base model by $\sim$5\%. This suggests that reasoning is an important and necessary capability to reduce visual hallucination. We note that CounterCurate is considered to be an easier benchmark than HallusionBench and SugarCrepe, as it primarily evaluates on models' ability to differentiate between left/right and top/down. 

Despite using fewer training samples during both SFT and RL stages, \model outperforms Video-R1 by large margins on both HallusionBench and SugarCrepe. These results further emphasize the importance of performing grounded reasoning on alleviating hallucination in LMMs. In contrast, we see that Video-R1 actually performs worse than the Qwen2.5VL-7B base model. Furthermore, \model outperforms ViGoRL~\cite{sarch2025grounded}, which is also trained to perform grounded reasoning, hinting at the benefits of computing more informative rewards.

\subsection{Embodied AI}
To determine the importance of performing grounded reasoning for agentic foundation models, we also report the results of our evaluations on EmbodiedBench~\cite{yangembodiedbench} across high-level task planning and completion tasks in the Alfred and Habitat environments.
The results are summarized in Tables \ref{tab:ebalfred-results} and \ref{tab:ebhabitat}. 
As shown in Table \ref{tab:ebalfred-results}, the base Qwen2.5VL-7B model generalizes poorly to task planning for agentic task completion, despite its strong performance on standard visual question answering benchmarks. Notably, our \model improves significantly on the sub-category of ``complex'' tasks by over $25\%$ for task success rates to the base model. This result shows that the reason capability is particularly beneficial for planning solving complex multi-step tasks. Furthermore, our \model is able to outperform baselines on the ``visual'' and ``spatial'' subtasks. This superior performance strongly suggest that localizing referred objects explicitly in the reasoning trace helps LMMs to leverage the visual content far more effectively. 

In the Habitat evaluation environment, we observe similar trends (Table~\ref{tab:ebhabitat}) as those found on the Alfred benchmark. Here, CoT prompting is beneficial on high-level task planning even for the non-reasoning base model, as evidenced by the $\sim3.6\%$ performance gain obtained over the base Qwen2.5-VL-7B model on average. In particular, the results demonstrate the clear benefits of adding grounded reasoning for generalizing to complex visual understanding tasks, where our model outperforms the CoT-prompted Qwen2.5-VL-7B by approximately $7\%$.




\vspace{-15pt}
\begin{table}[th]
\centering
\footnotesize
\setlength{\tabcolsep}{3.5pt}
\renewcommand{\arraystretch}{1.15}
\begin{tabular}{l cccccc}
\toprule
 & 
      \multicolumn{4}{c}{\textbf{Libero}} \\
\cmidrule(lr){2-7}
\textbf{Model}   &
Object & Spatial & Goal & Long & 90 & Avg\\
\midrule
Qwen2.5-VL 7B~\cite{bai2025qwen2}          &  84.0 & \textbf{93.6} & 80.6 & 60.2 & 82.3 & 80.1\\
Qwen2.5-VL-7B-Instruct &  81.6 & 93.4 & 84.4 & 62.4 & 77.3 & 79.8\\
Qwen2.5-VL 7B (SFT)          &  88.0 & 91.0 & 84.0 & 66.1 & 83.3 & 82.4\\
Video-R1~\cite{feng2025video}                 & 89.2 & 93.0 & 89.6 & 65.6 & 80.1 & 83.5 \\
ViGoRL~\cite{sarch2025grounded}                 & 87.8 & 90.8 & 84.2 & 67.8 & 79.6 & 82.0 \\
\model (Ours)          &  \textbf{93.2} & 91.2 & \textbf{87.8} & \textbf{63.8} & \textbf{85.0} & \textbf{84.2}\\
\bottomrule
\end{tabular}
\caption{Success rates (\%) on the LIBERO~\cite{liu2023libero} continuous control robotics benchmark. When finetuned to predict continuous robot control actions, our MMRL approach outperforms baselines in terms of both performance and data efficiency. }
\label{tab:robotics-results}
\vspace{-30pt}
\end{table}

\subsection{Robotics evaluations}
Embodied decision-making often requires learning complex and generalizable robotic behaviors. Such agents typically require representations informed by world knowledge for perceptual grounding, planning, and control. Given that representations learned via grounded SFT embodied chain-of-thought~\cite{Zawalski24-ecot,chen25training,yang2025magma} have been shown to be conducive to more generalizable Vision-Language-Action models (VLAs)~\cite{zitkovich2023rt,kim2025openvla,open_x_embodiment_rt_x_2023}, we further evaluate our model on complex robotics tasks. We post-train \model and the baselines as VLAs to assess the transferability of their representations and their ability to fit complex, multimodal action distributions across multiple simulated benchmarks.

\vspace{-10pt}
\paragraph{Libero.}
We evaluate our method on the LIBERO simulation benchmark~\cite{liu2023libero}, which utilizes a Panda robot with delta end-effector control. Our evaluation follows two distinct protocols based on established task suites:
\begin{itemize}
    \item Specialized Suites: We first evaluate on four 10-task suites: LIBERO-Spatial, LIBERO-Object, LIBERO-Goal, and LIBERO-Long. For this setup, we train a single policy on the combined datasets from all four suites.
    \item Diverse Suite: We separately evaluate on the LIBERO-90 benchmark, which comprises 90 different tasks. We train a dedicated policy using only the LIBERO-90 dataset.
\end{itemize}

For all experiments, we use both third-person and wrist camera images as input. The training data is prepared by re-rendering images to 224x224, filtering unsuccessful demonstrations, and removing ``no-op'' actions, following \cite{pertsch25-fast}. We report the percentage of successful task completions in Table~\ref{tab:robotics-results}, averaged over 50 trials per task for the four specialized suites and 20 trials per task for LIBERO-90. We observe that our method outperforms the base model, the SFT models and even Video-R1, which was trained on 270k data samples, some of which contained spatial related data which should be beneficial for libero. In contrast, our multimodal reinforcement learning approach outperforms the baselines both in terms of performance as well as sample efficiency (260k vs 85k).

\begin{table}[t]
\centering
\footnotesize
\setlength{\tabcolsep}{1.5pt}
\resizebox{\textwidth}{!}{%
\begin{tabular}{lcccccccc}
\toprule
\textbf{Model} & \textbf{BLINK} & \textbf{MindCube-t} & \textbf{CV-Bench} &  \textbf{CounterCurate} & \textbf{HallusionBench} & \textbf{SugarCrepe} &
\textbf{EB-Alfred} &
\textbf{EB-Habitat}\\
\midrule
\ablateresultrow{\model (Ours)}{\textbf{57.6}}{\textbf{37.2}}{\textbf{79.5}}{81.9}{\textbf{49.1}}{\textbf{88.0}}{\textbf{14.3}}{\textbf{18.8}}
\ablateresultrow{-- RQ}{56.9}{36.3}{76.9}{81.9}{48.7}{87.5}{14.0}{17.8}
\ablateresultrow{-- VG}{55.8}{36.3}{77.5}{\textbf{82.5}}{48.0}{86.7}{13.1}{18.4}

\bottomrule
\end{tabular}
}
\caption{Ablation results on spatial reasoning benchmarks. Ablation results on visual hallucination benchmarks. We note that HallusionBench typically uses the since deprecated GPT-4 Turbo as a grader. Thus, we replace GPT-4 Turbo with GPT-4.1 as the grader which may lead to higher performances on average than before.}
\label{tab:all_ablation_results}
\vspace{-15pt}
\end{table}

\begin{figure}[t]
    \centering
    \begin{subfigure}{0.49\linewidth}
        \centering
        \includegraphics[width=\linewidth]{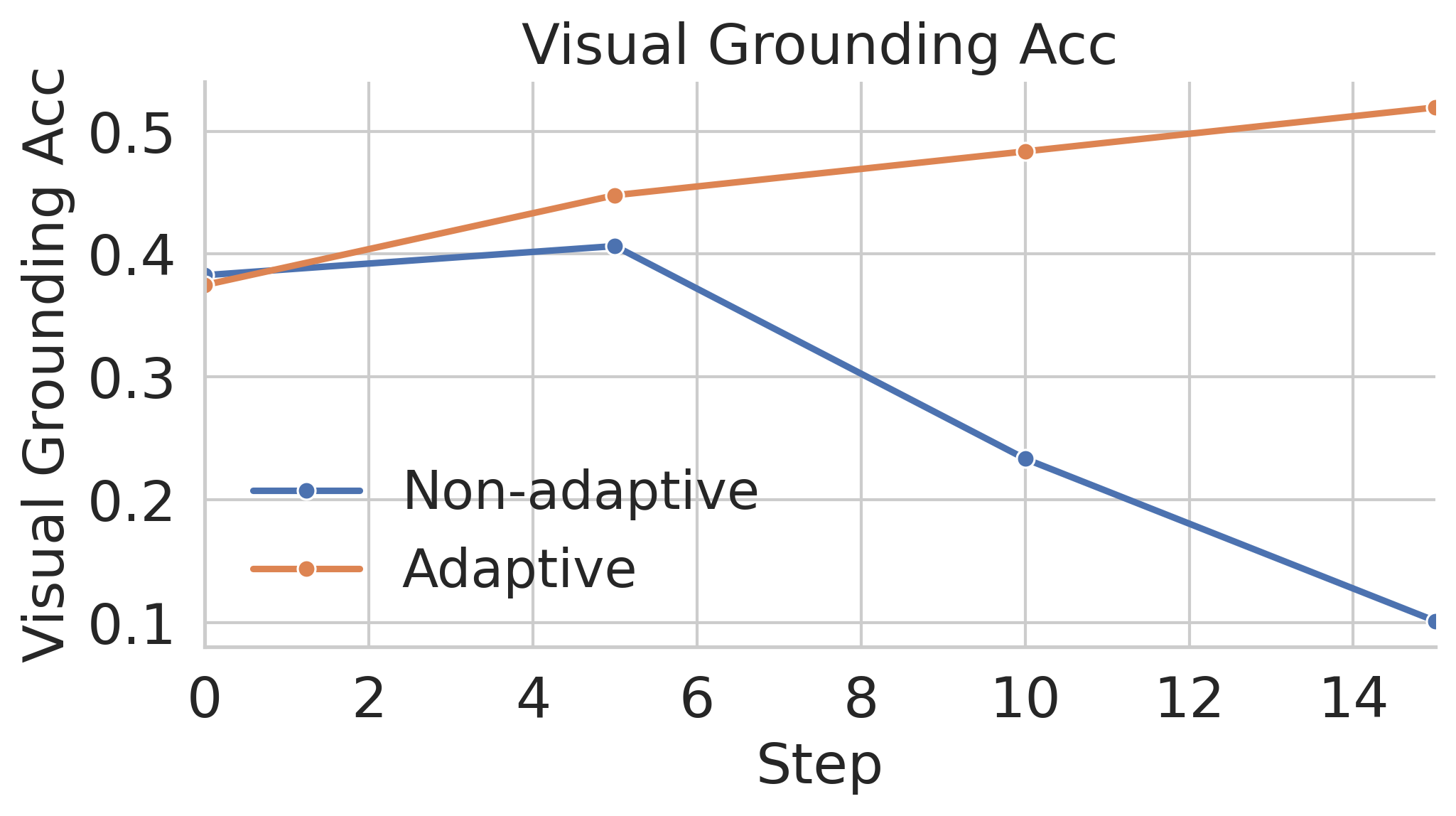}
        \caption{Visual grounding accuracy}
        \label{fig:reasoning_quality}
    \end{subfigure}
    \hfill
    \begin{subfigure}{0.49\linewidth}
        \centering
        \includegraphics[width=\linewidth]{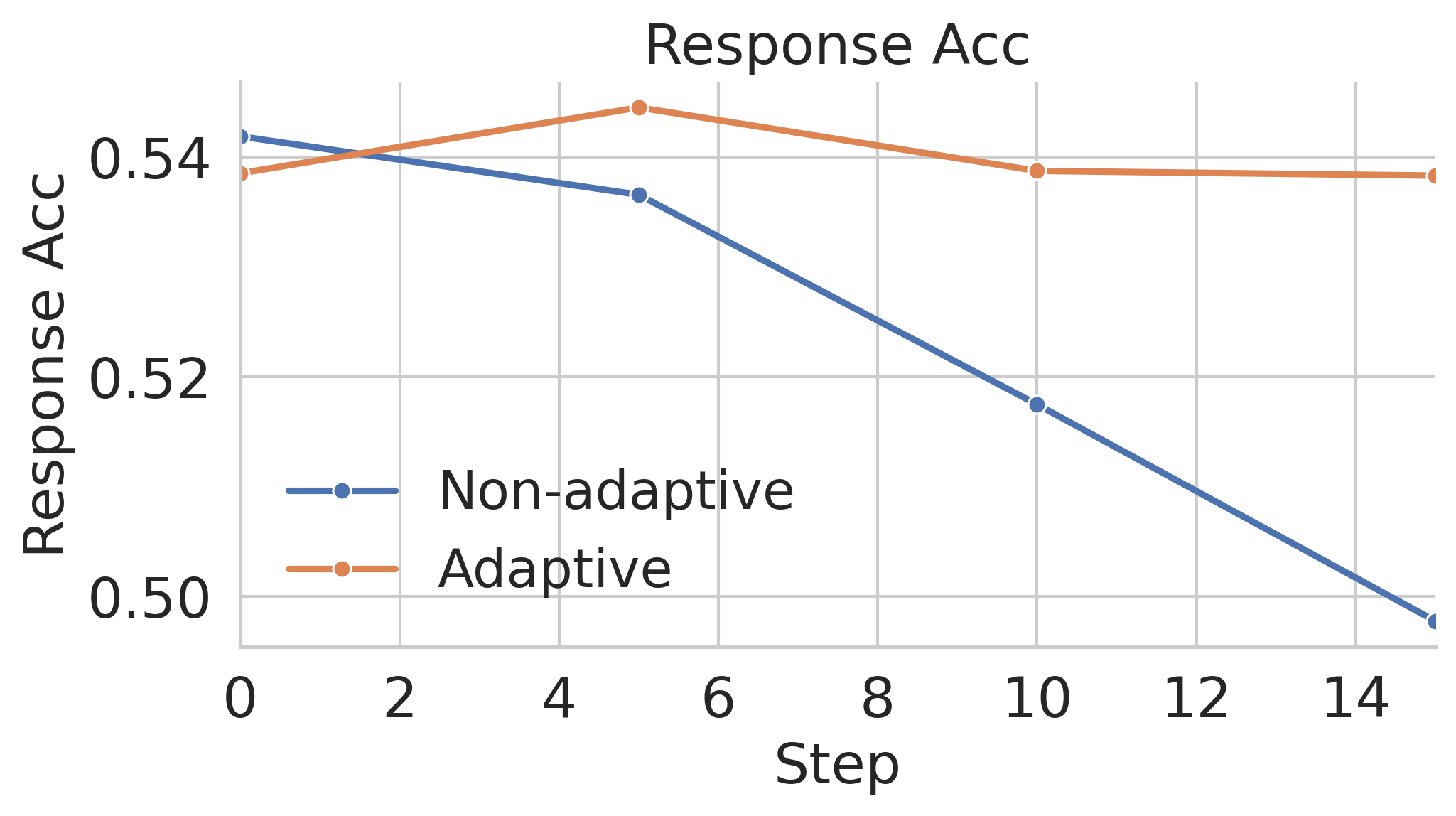}
        \caption{Response accuracy}
        \label{fig:second_metric}
    \end{subfigure}
    \vspace{-10pt}
    \caption{We run a small-scale comparison to ablate the effectiveness of \model (\textit{agentic}) compared to using only outcome rewards (\textit{non-adaptive}).}
    \label{fig:verifier_ablation}
    \vspace{-15pt}
\end{figure}

\subsection{Ablation Study}\label{sec:main-paper-ablation}
In this section, we ablate the effects of adding the different reward terms during the RL training stage in Table~\ref{tab:all_ablation_results} across the spatial reasoning, visual hallucination and embodied AI benchmarks. We conduct smaller-scale ablation experiments with a reduced training set and fewer numbers of training steps to analyze the reasoning quality (RQ) and visual grounding (VG) reward terms for MMRL on a small subset of the Pixmo-Count~\cite{deitke2024molmo} dataset combined with another subset of the Video-R1~\cite{feng2025video} image split. In total, our ablation RL training set numbers about 1.5K samples. Starting from the same SFT checkpoint finetuned with our curated data, we train three variants by removing the reasoning quality reward term and visual grounding reward terms in sequence. In general, we observe that adding the VG reward term is beneficial for improving performance across visual hallucination, spatial reasoning and embodied tasks. Additionally, we also see that adding the RQ reward term can also be helpful to further improve performance on such agentic benchmarks. Interestingly, using only the outcome reward term actually leads to the best performance on the CounterCurate benchmark in this ablation study. However, CounterCurate only evaluates on up/down and left/right questions is much less complex than HallusionBench and SugarCrepe. On the latter datasets which evaluates multiple axes of visual hallucination including removal and addition of objects, the results demonstrate that adding RQ and VG reward terms is generally beneficial.

\textbf{Adaptive verification helps mitigate OOD forgetting.} We analyze the importance of using our \model \space to compute more informative rewards for MMRL on a small subset of the Pixmo-Count~\cite{deitke2024molmo} dataset. Starting from the same SFT checkpoint finetuned with our curated data, we train two variants: one with \model (adaptive verifier) and one only using the outcome reward (non-adaptive verifier). We report the evaluation results on an out-of-domain (OOD) evaluation dataset of 1.5K samples sampled from Video-R1-CoT-165k in Figure~\ref{fig:verifier_ablation}.
We observe that, without verifiers of \model, the visual grounding accuracy drops rapidly. Importantly, training only on the outcome reward also leads to a performance drop on the validation set. This suggests that the aggregated final reward from our adaptive verifier helps reduce the likelihood of reward hacking.


\section{Conclusion}
In this work, we introduce \model, a novel  approach to perform MMRL by adaptively selecting different teacher models to compute dense rewards on a per-sample basis. It formulates MMRL into a multi-objective optimization problem by jointly rewarding outcome accuracy, spatiotemporal grounding, and reasoning quality, rather than relying on final answers alone. Importantly, we made two crucial observations. First, curating SFT grounded annotations is insufficient to train a grounded reasoning model. Second, our \model helps the base model to avoid reward-hacking by providing dense and robust rewards. Extensive experiments show that \model significantly outperforms prior work on a wide array of challenging agentic tasks, from spatial reasoning in images and videos to embodied AI and robotics. We hope that advances in task-specific models will improve \model’s reward signals, enabling the training of more effective and better-grounded multimodal agents in the future.


%
%
\bibliographystyle{splncs04}
\bibliography{main}

@String(ICCV  = {Int. Conf. Comput. Vis.})

@String(ICML  = {Int. Conf. Mach. Learn.})

@String(ICLR  = {Int. Conf. Learn. Represent.})

@String(ICCV  = {ICCV})

@String(ICML  = {ICML})

@String(ICLR  = {ICLR})

@String(ICCV= {Int. Conf. Comput. Vis.})

@String(ICLR = {Int. Conf. Learn. Represent.})

@inproceedings{fu2024blink,
  title={Blink: Multimodal large language models can see but not perceive},
  author={Fu, Xingyu and Hu, Yushi and Li, Bangzheng and Feng, Yu and Wang, Haoyu and Lin, Xudong and Roth, Dan and Smith, Noah A and Ma, Wei-Chiu and Krishna, Ranjay},
  booktitle={European Conference on Computer Vision},
  pages={148--166},
  year={2024},
  organization={Springer}
}

@article{yu2025dapo,
  title={Dapo: An open-source llm reinforcement learning system at scale},
  author={Yu, Qiying and Zhang, Zheng and Zhu, Ruofei and Yuan, Yufeng and Zuo, Xiaochen and Yue, Yu and Dai, Weinan and Fan, Tiantian and Liu, Gaohong and Liu, Lingjun and others},
  journal={arXiv preprint arXiv:2503.14476},
  year={2025}
}

@article{shao2024deepseekmath,
  title={Deepseekmath: Pushing the limits of mathematical reasoning in open language models},
  author={Shao, Zhihong and Wang, Peiyi and Zhu, Qihao and Xu, Runxin and Song, Junxiao and Bi, Xiao and Zhang, Haowei and Zhang, Mingchuan and Li, YK and others},
  journal={arXiv preprint arXiv:2402.03300},
  year={2024}
}

@article{zhu2023minigpt,
  title={Minigpt-4: Enhancing vision-language understanding with advanced large language models},
  author={Zhu, Deyao and Chen, Jun and Shen, Xiaoqian and Li, Xiang and Elhoseiny, Mohamed},
  journal={arXiv preprint arXiv:2304.10592},
  year={2023}
}

@article{huang25mslmaps,
              title={Multimodal Spatial Language Maps for Robot Navigation and Manipulation},
              author={Chenguang Huang and Oier Mees and Andy Zeng and Wolfram Burgard},
              journal={International Journal of Robotics Research (IJRR)},
              year={2025},
}

@article{liu2023visual,
  title={Visual instruction tuning},
  author={Liu, Haotian and Li, Chunyuan and Wu, Qingyang and Lee, Yong Jae},
  journal={Advances in neural information processing systems},
  volume={36},
  pages={34892--34916},
  year={2023}
}

@article{schick2023toolformer,
  title={Toolformer: Language models can teach themselves to use tools},
  author={Schick, Timo and Dwivedi-Yu, Jane and Dess{\`\i}, Roberto and Raileanu, Roberta and Lomeli, Maria and Hambro, Eric and Zettlemoyer, Luke and Cancedda, Nicola and Scialom, Thomas},
  journal={Advances in Neural Information Processing Systems},
  volume={36},
  pages={68539--68551},
  year={2023}
}

@inproceedings{mees23hulc2,
  title={Grounding  Language  with  Visual  Affordances  over  Unstructured  Data},
  author={Oier Mees and Jessica Borja-Diaz and Wolfram Burgard},
  booktitle = {Proceedings of the IEEE International Conference on Robotics and Automation (ICRA)},
  year={2023},
  address = {London, UK}
}

@inproceedings{yangembodiedbench,
  title={EmbodiedBench: Comprehensive Benchmarking Multi-modal Large Language Models for Vision-Driven Embodied Agents},
  author={Yang, Rui and Chen, Hanyang and Zhang, Junyu and Zhao, Mark and Qian, Cheng and Wang, Kangrui and Wang, Qineng and Koripella, Teja Venkat and Movahedi, Marziyeh and Li, Manling and others},
  booktitle={Forty-second International Conference on Machine Learning},
 year={2025}
}

@inproceedings{Zawalski24-ecot,
    title={Robotic Control via Embodied Chain-of-Thought Reasoning},
    author={Michał Zawalski and William Chen and Karl Pertsch and Oier Mees and Chelsea Finn and Sergey Levine},
    booktitle={Conference on Robot Learning},
    year={2024}
}

@inproceedings{chen25training,
  title={Training Strategies for Efficient Embodied Reasoning},
  author={William Chen and Suneel Belkhale and Suvir Mirchandani and Oier Mees  and Danny Driess and Karl Pertsch and Sergey Levine},
  booktitle={Conference on Robot Learning},
  year={2025},
}

@inproceedings{yang2025magma,
  title={Magma: A foundation model for multimodal ai agents},
  author={Yang, Jianwei and Tan, Reuben and Wu, Qianhui and Zheng, Ruijie and Peng, Baolin and Liang, Yongyuan and Gu, Yu and Cai, Mu and Ye, Seonghyeon and Jang, Joel and others},
  booktitle={Proceedings of the Computer Vision and Pattern Recognition Conference},
  pages={14203--14214},
  year={2025}
}

@inproceedings{huang23vlmaps,
  title={Visual Language Maps for Robot Navigation},
  author={Chenguang Huang and Oier Mees and Andy Zeng and Wolfram Burgard},
  booktitle = {Proceedings of the IEEE International Conference on Robotics and Automation (ICRA)},
  year={2023},
  address = {London, UK}
}

@inproceedings{zitkovich2023rt,
  title={Rt-2: Vision-language-action models transfer web knowledge to robotic control},
  author={Zitkovich, Brianna and Yu, Tianhe and Xu, Sichun and Xu, Peng and Xiao, Ted and Xia, Fei and Wu, Jialin and Wohlhart, Paul and Welker, Stefan and Wahid, Ayzaan and others},
  booktitle={Conference on Robot Learning},
  pages={2165--2183},
  year={2023},
  organization={PMLR}
}

@inproceedings{kim2025openvla,
  title={OpenVLA: An Open-Source Vision-Language-Action Model},
  author={Kim, Moo Jin and Pertsch, Karl and Karamcheti, Siddharth and Xiao, Ted and Balakrishna, Ashwin and Nair, Suraj and Rafailov, Rafael and Foster, Ethan P and Sanketi, Pannag R and Vuong, Quan and others},
  booktitle={Conference on Robot Learning},
  pages={2679--2713},
  year={2025},
  organization={PMLR}
}

@inproceedings{open_x_embodiment_rt_x_2023,
title={Open {X-E}mbodiment: Robotic Learning Datasets and {RT-X} Models},
author = {Open X-Embodiment Collaboration and Abhishek Padalkar and Acorn Pooley and Ajinkya Jain and Alex Bewley and Alex Herzog and Alex Irpan and Alexander Khazatsky and Anant Rai and Anikait Singh and Anthony Brohan and Antonin Raffin and Ayzaan Wahid and Ben Burgess-Limerick and Beomjoon Kim and Bernhard Schölkopf and Brian Ichter and Cewu Lu and Charles Xu and Chelsea Finn and Chenfeng Xu and Cheng Chi and Chenguang Huang and Christine Chan and Chuer Pan and Chuyuan Fu and Coline Devin and Danny Driess and Deepak Pathak and Dhruv Shah and Dieter Büchler and Dmitry Kalashnikov and Dorsa Sadigh and Edward Johns and Federico Ceola and Fei Xia and Freek Stulp and Gaoyue Zhou and Gaurav S. Sukhatme and Gautam Salhotra and Ge Yan and Giulio Schiavi and Hao Su and Hao-Shu Fang and Haochen Shi and Heni Ben Amor and Henrik I Christensen and Hiroki Furuta and Homer Walke and Hongjie Fang and Igor Mordatch and Ilija Radosavovic and Isabel Leal and Jacky Liang and Jaehyung Kim and Jan Schneider and Jasmine Hsu and Jeannette Bohg and Jeffrey Bingham and Jiajun Wu and Jialin Wu and Jianlan Luo and Jiayuan Gu and Jie Tan and Jihoon Oh and Jitendra Malik and Jonathan Tompson and Jonathan Yang and Joseph J. Lim and João Silvério and Junhyek Han and Kanishka Rao and Karl Pertsch and Karol Hausman and Keegan Go and Keerthana Gopalakrishnan and Ken Goldberg and Kendra Byrne and Kenneth Oslund and Kento Kawaharazuka and Kevin Zhang and Keyvan Majd and Krishan Rana and Krishnan Srinivasan and Lawrence Yunliang Chen and Lerrel Pinto and Liam Tan and Lionel Ott and Lisa Lee and Masayoshi Tomizuka and Maximilian Du and Michael Ahn and Mingtong Zhang and Mingyu Ding and Mohan Kumar Srirama and Mohit Sharma and Moo Jin Kim and Naoaki Kanazawa and Nicklas Hansen and Nicolas Heess and Nikhil J Joshi and Niko Suenderhauf and Norman Di Palo and Nur Muhammad Mahi Shafiullah and Oier Mees and Oliver Kroemer and Pannag R Sanketi and Paul Wohlhart and Peng Xu and Pierre Sermanet and Priya Sundaresan and Quan Vuong and Rafael Rafailov and Ran Tian and Ria Doshi and Roberto Martín-Martín and Russell Mendonca and Rutav Shah and Ryan Hoque and Ryan Julian and Samuel Bustamante and Sean Kirmani and Sergey Levine and Sherry Moore and Shikhar Bahl and Shivin Dass and Shuran Song and Sichun Xu and Siddhant Haldar and Simeon Adebola and Simon Guist and Soroush Nasiriany and Stefan Schaal and Stefan Welker and Stephen Tian and Sudeep Dasari and Suneel Belkhale and Takayuki Osa and Tatsuya Harada and Tatsuya Matsushima and Ted Xiao and Tianhe Yu and Tianli Ding and Todor Davchev and Tony Z. Zhao and Travis Armstrong and Trevor Darrell and Vidhi Jain and Vincent Vanhoucke and Wei Zhan and Wenxuan Zhou and Wolfram Burgard and Xi Chen and Xiaolong Wang and Xinghao Zhu and Xuanlin Li and Yao Lu and Yevgen Chebotar and Yifan Zhou and Yifeng Zhu and Ying Xu and Yixuan Wang and Yonatan Bisk and Yoonyoung Cho and Youngwoon Lee and Yuchen Cui and Yueh-hua Wu and Yujin Tang and Yuke Zhu and Yunzhu Li and Yusuke Iwasawa and Yutaka Matsuo and Zhuo Xu and Zichen Jeff Cui},
booktitle = {Proceedings of the IEEE International Conference on Robotics and Automation  (ICRA)},
year = {2024},
address = {Yokohama, Japan}
}

@inproceedings{pertsch25-fast,
    title={FAST: Efficient Action Tokenization for Vision-Language-Action Models},
    author={Karl Pertsch and Kyle Stachowicz and Brian Ichter and Danny Driess and Suraj Nair and Quan Vuong and Oier Mees and Chelsea Finn and Sergey Levine},
    booktitle = {Proceedings of Robotics: Science and Systems},
    address  = {Los Angeles, USA},
    year={2025}
}

@inproceedings{jones24fuse,
  title={Beyond Sight: Finetuning Generalist Robot Policies with Heterogeneous Sensors via Language Grounding},
  author={Joshua Jones and Oier Mees and Carmelo Sferrazza and Kyle Stachowicz and Pieter Abbeel and Sergey Levine},
  booktitle = {Proceedings of the IEEE International Conference on Robotics and Automation (ICRA)},
  year={2025},
  address = {Atlanta, USA}
}

@article{liu2023libero,
  title={Libero: Benchmarking knowledge transfer for lifelong robot learning},
  author={Liu, Bo and Zhu, Yifeng and Gao, Chongkai and Feng, Yihao and Liu, Qiang and Zhu, Yuke and Stone, Peter},
  journal={Advances in Neural Information Processing Systems},
  volume={36},
  pages={44776--44791},
  year={2023}
}

@article{hong2025glm,
  title={Glm-4.1 v-thinking: Towards versatile multimodal reasoning with scalable reinforcement learning},
  author={Hong, Wenyi and Yu, Wenmeng and Gu, Xiaotao and Wang, Guo and Gan, Guobing and Tang, Haomiao and Cheng, Jiale and Qi, Ji and Ji, Junhui and Pan, Lihang and others},
  journal={arXiv e-prints},
  pages={arXiv--2507},
  year={2025}
}

@article{mees2022calvin,
author = {Oier Mees and Lukas Hermann and Erick Rosete-Beas and Wolfram Burgard},
title = {CALVIN: A Benchmark for Language-Conditioned Policy Learning for Long-Horizon Robot Manipulation Tasks},
journal={IEEE Robotics and Automation Letters (RA-L)},
volume={7},  
number={3},  
pages={7327-7334},
year={2022},
}

@article{deitke2024molmo,
  title={Molmo and pixmo: Open weights and open data for state-of-the-art multimodal models},
  author={Deitke, Matt and Clark, Christopher and Lee, Sangho and Tripathi, Rohun and Yang, Yue and Park, Jae Sung and Salehi, Mohammadreza and Muennighoff, Niklas and Lo, Kyle and Soldaini, Luca and others},
  journal={arXiv e-prints},
  pages={arXiv--2409},
  year={2024}
}

@String(ICML = {ICML})

@inproceedings{zhao2024pareto,
  title={Pareto Optimal Learning for Estimating Large Language Model Errors},
  author={Zhao, Theodore and Wei, Mu and Preston, J and Poon, Hoifung},
  booktitle={Proceedings of the 62nd Annual Meeting of the Association for Computational Linguistics (Volume 1: Long Papers)},
  pages={10513--10529},
  year={2024}
}

@article{durante2024agent,
  title={Agent AI: Surveying the Horizons of Multimodal Interaction},
  author={Durante, Zane and Huang, Qiuyuan and Wake, Naoki and Gong, Ran and Park, Jae Sung and Sarkar, Bidipta and Taori, Rohan and Noda, Yusuke and Terzopoulos, Demetri and Choi, Yejin and others},
  journal={CoRR},
  year={2024}
}

@article{dong2024tool,
  title={Tool-lmm: A large multi-modal model for tool agent learning},
  author={Dong, Sixun},
  year={2024}
}

@article{wang2024genartist,
  title={Genartist: Multimodal llm as an agent for unified image generation and editing},
  author={Wang, Zhenyu and Li, Aoxue and Li, Zhenguo and Liu, Xihui},
  journal={Advances in Neural Information Processing Systems},
  volume={37},
  pages={128374--128395},
  year={2024}
}

@article{li2025perception,
  title={Perception, reason, think, and plan: A survey on large multimodal reasoning models},
  author={Li, Yunxin and Liu, Zhenyu and Li, Zitao and Zhang, Xuanyu and Xu, Zhenran and Chen, Xinyu and Shi, Haoyuan and Jiang, Shenyuan and Wang, Xintong and Wang, Jifang and others},
  journal={arXiv preprint arXiv:2505.04921},
  year={2025}
}

@inproceedings{he2025breaking,
  title={Breaking the reasoning barrier a survey on llm complex reasoning through the lens of self-evolution},
  author={He, Tao and Li, Hao and Chen, Jingchang and Liu, Runxuan and Cao, Yixin and Liao, Lizi and Zheng, Zihao and Chu, Zheng and Liang, Jiafeng and Liu, Ming and others},
  booktitle={Findings of the Association for Computational Linguistics: ACL 2025},
  pages={7377--7417},
  year={2025}
}

@article{kumar2024mmctagent,
  title={Mmctagent: Multi-modal critical thinking agent framework for complex visual reasoning},
  author={Kumar, Somnath and Gadhia, Yash and Ganu, Tanuja and Nambi, Akshay},
  journal={arXiv preprint arXiv:2405.18358},
  year={2024}
}

@article{jia2021scaling,
  title={Scaling up visual and vision-language representation learning with noisy text supervision},
  author={Jia, Chao and Yang, Yinfei and Xia, Ye and Chen, Yi-Ting and Parekh, Zarana and Pham, Hieu and Le, Quoc V and Sung, Yunhsuan and Li, Zhen and Duerig, Tom},
  journal={arXiv preprint},
  year={2021}
}

@inproceedings{radford2021learning,
  title={Learning transferable visual models from natural language supervision},
  author={Radford, Alec and Kim, Jong Wook and Hallacy, Chris and Ramesh, Aditya and Goh, Gabriel and Agarwal, Sandhini and Sastry, Girish and Askell, Amanda and Mishkin, Pamela and Clark, Jack and others},
 booktitle={ICML},
  year={2021}
}

@inproceedings{guo2024regiongpt,
  title={Regiongpt: Towards region understanding vision language model},
  author={Guo, Qiushan and De Mello, Shalini and Yin, Hongxu and Byeon, Wonmin and Cheung, Ka Chun and Yu, Yizhou and Luo, Ping and Liu, Sifei},
  booktitle={Proceedings of the IEEE/CVF Conference on Computer Vision and Pattern Recognition},
  pages={13796--13806},
  year={2024}
}

@article{ying2024mmt,
  title={Mmt-bench: A comprehensive multimodal benchmark for evaluating large vision-language models towards multitask agi},
  author={Ying, Kaining and Meng, Fanqing and Wang, Jin and Li, Zhiqian and Lin, Han and Yang, Yue and Zhang, Hao and Zhang, Wenbo and Lin, Yuqi and Liu, Shuo and others},
  journal={arXiv preprint arXiv:2404.16006},
  year={2024}
}

@inproceedings{liu2024mmbench,
  title={Mmbench: Is your multi-modal model an all-around player?},
  author={Liu, Yuan and Duan, Haodong and Zhang, Yuanhan and Li, Bo and Zhang, Songyang and Zhao, Wangbo and Yuan, Yike and Wang, Jiaqi and He, Conghui and Liu, Ziwei and others},
  booktitle={European conference on computer vision},
  pages={216--233},
  year={2024},
  organization={Springer}
}

@article{zhang2023video,
  title={Video-llama: An instruction-tuned audio-visual language model for video understanding},
  author={Zhang, Hang and Li, Xin and Bing, Lidong},
  journal={arXiv preprint arXiv:2306.02858},
  year={2023}
}

@article{gemini,
  title={Gemini: A Family of Highly Capable Multimodal Models},
  author={Gemini Team, Google},
  year={2023},
  url = "https://storage.googleapis.com/deepmind-media/gemini/gemini_1_report.pdf",
}

@article{alayrac2022flamingo,
  title={Flamingo: a visual language model for few-shot learning},
  author={Alayrac, Jean-Baptiste and Donahue, Jeff and Luc, Pauline and Miech, Antoine and Barr, Iain and Hasson, Yana and Lenc, Karel and Mensch, Arthur and Millican, Katherine and Reynolds, Malcolm and others},
  journal={Advances in Neural Information Processing Systems},
  volume={35},
  pages={23716--23736},
  year={2022}
}

@article{liu2023hallusionbench,
  title={Hallusionbench: You see what you think? or you think what you see? an image-context reasoning benchmark challenging for gpt-4v (ision), llava-1.5, and other multi-modality models},
  author={Liu, Fuxiao and Guan, Tianrui and Li, Zongxia and Chen, Lichang and Yacoob, Yaser and Manocha, Dinesh and Zhou, Tianyi},
  journal={arXiv preprint arXiv:2310.14566},
  year={2023}
}

@inproceedings{yao2023react,
  title={React: Synergizing reasoning and acting in language models},
  author={Yao, Shunyu and Zhao, Jeffrey and Yu, Dian and Du, Nan and Shafran, Izhak and Narasimhan, Karthik and Cao, Yuan},
  booktitle={International Conference on Learning Representations (ICLR)},
  year={2023}
}

@article{wang2022self,
  title={Self-instruct: Aligning language models with self-generated instructions},
  author={Wang, Yizhong and Kordi, Yeganeh and Mishra, Swaroop and Liu, Alisa and Smith, Noah A and Khashabi, Daniel and Hajishirzi, Hannaneh},
  journal={arXiv preprint arXiv:2212.10560},
  year={2022}
}

@article{hafner2023mastering,
  title={Mastering diverse domains through world models},
  author={Hafner, Danijar and Pasukonis, Jurgis and Ba, Jimmy and Lillicrap, Timothy},
  journal={arXiv preprint arXiv:2301.04104},
  year={2023}
}

@article{chen2022transdreamer,
  title={Transdreamer: Reinforcement learning with transformer world models},
  author={Chen, Chang and Wu, Yi-Fu and Yoon, Jaesik and Ahn, Sungjin},
  journal={arXiv preprint arXiv:2202.09481},
  year={2022}
}

@inproceedings{park2023generative,
  title={Generative agents: Interactive simulacra of human behavior},
  author={Park, Joon Sung and O'Brien, Joseph and Cai, Carrie Jun and Morris, Meredith Ringel and Liang, Percy and Bernstein, Michael S},
  booktitle={Proceedings of the 36th annual acm symposium on user interface software and technology},
  pages={1--22},
  year={2023}
}

@article{driess2023palm,
  title={Palm-e: An embodied multimodal language model},
  author={Driess, Danny and Xia, Fei and Sajjadi, Mehdi SM and Lynch, Corey and Chowdhery, Aakanksha and Wahid, Ayzaan and Tompson, Jonathan and Vuong, Quan and Yu, Tianhe and Huang, Wenlong and others},
  year={2023}
}

@article{zhang2023multimodal,
  title={Multimodal chain-of-thought reasoning in language models},
  author={Zhang, Zhuosheng and Zhang, Aston and Li, Mu and Zhao, Hai and Karypis, George and Smola, Alex},
  journal={arXiv preprint arXiv:2302.00923},
  year={2023}
}

@inproceedings{sermanet2024robovqa,
  title={Robovqa: Multimodal long-horizon reasoning for robotics},
  author={Sermanet, Pierre and Ding, Tianli and Zhao, Jeffrey and Xia, Fei and Dwibedi, Debidatta and Gopalakrishnan, Keerthana and Chan, Christine and Dulac-Arnold, Gabriel and Maddineni, Sharath and Joshi, Nikhil J and others},
  booktitle={2024 IEEE International Conference on Robotics and Automation (ICRA)},
  pages={645--652},
  year={2024},
  organization={IEEE}
}

@article{yang2025embodiedbench,
  title={EmbodiedBench: Comprehensive Benchmarking Multi-modal Large Language Models for Vision-Driven Embodied Agents},
  author={Yang, Rui and Chen, Hanyang and Zhang, Junyu and Zhao, Mark and Qian, Cheng and Wang, Kangrui and Wang, Qineng and Koripella, Teja Venkat and Movahedi, Marziyeh and Li, Manling and others},
  journal={arXiv preprint arXiv:2502.09560},
  year={2025}
}

@article{zhou2022least,
  title={Least-to-most prompting enables complex reasoning in large language models},
  author={Zhou, Denny and Sch{\"a}rli, Nathanael and Hou, Le and Wei, Jason and Scales, Nathan and Wang, Xuezhi and Schuurmans, Dale and Cui, Claire and Bousquet, Olivier and Le, Quoc and others},
  journal={arXiv preprint arXiv:2205.10625},
  year={2022}
}

@inproceedings{hatch2024videoglue,
  title={GHIL-Glue: Hierarchical Control with Filtered Subgoal Images},
  author={Kyle Beltran Hatch and Ashwin Balakrishna and Oier Mees and Suraj Nair and Seohong Park and Blake Wulfe and Masha Itkina and Benjamin Eysenbach and Sergey Levine and Thomas Kollar and Benjamin Burchfiel},
  booktitle = {Proceedings of the IEEE International Conference on Robotics and Automation (ICRA)},
  year={2025},
  address = {Atlanta, USA}
}

@article{ye2023mplug,
  title={mplug-owl: Modularization empowers large language models with multimodality},
  author={Ye, Qinghao and Xu, Haiyang and Xu, Guohai and Ye, Jiabo and Yan, Ming and Zhou, Yiyang and Wang, Junyang and Hu, Anwen and Shi, Pengcheng and Shi, Yaya and others},
  journal={arXiv preprint arXiv:2304.14178},
  year={2023}
}

@article{huang2023voxposer,
  title={Voxposer: Composable 3d value maps for robotic manipulation with language models},
  author={Huang, Wenlong and Wang, Chen and Zhang, Ruohan and Li, Yunzhu and Wu, Jiajun and Fei-Fei, Li},
  journal={arXiv preprint arXiv:2307.05973},
  year={2023}
}

@INPROCEEDINGS{rosete2022corl,
author = {Erick Rosete-Beas and Oier Mees and Gabriel Kalweit and Joschka Boedecker and Wolfram Burgard},
title = {Latent Plans for Task Agnostic Offline Reinforcement Learning},
booktitle = {Proceedings of the 6th Conference on Robot Learning (CoRL)},
year = {2022},
address = {Auckland, New Zealand},
}

@article{zheng2024tracevla,
  title={Tracevla: Visual trace prompting enhances spatial-temporal awareness for generalist robotic policies},
  author={Zheng, Ruijie and Liang, Yongyuan and Huang, Shuaiyi and Gao, Jianfeng and Daum{\'e} III, Hal and Kolobov, Andrey and Huang, Furong and Yang, Jianwei},
  journal={arXiv preprint arXiv:2412.10345},
  year={2024}
}

@article{hafner2025training,
  title={Training agents inside of scalable world models},
  author={Hafner, Danijar and Yan, Wilson and Lillicrap, Timothy},
  journal={arXiv preprint arXiv:2509.24527},
  year={2025}
}

@inproceedings{myers2024policy,
  title={Policy Adaptation via Language Optimization: Decomposing Tasks for Few-Shot Imitation},
  author={Vivek Myers and Bill Chunyuan Zheng and Oier Mees and Sergey Levine and Kuan Fang},
  booktitle={Conference on Robot Learning},
  year={2024}
}

@article{chen2024vision,
  title={Vision-Language Models Provide Promptable Representations for Reinforcement Learning},
  author={Chen, William and Mees, Oier and Kumar, Aviral and Levine, Sergey},
  journal={Transactions on Machine Learning Research},
  year={2025}
}

@article{tong2024cambrian,
  title={Cambrian-1: A fully open, vision-centric exploration of multimodal llms},
  author={Tong, Peter and Brown, Ellis and Wu, Penghao and Woo, Sanghyun and IYER, Adithya Jairam Vedagiri and Akula, Sai Charitha and Yang, Shusheng and Yang, Jihan and Middepogu, Manoj and Wang, Ziteng and others},
  journal={Advances in Neural Information Processing Systems},
  volume={37},
  pages={87310--87356},
  year={2024}
}

@article{radosavovic2024real,
  title={Real-world humanoid locomotion with reinforcement learning},
  author={Radosavovic, Ilija and Xiao, Tete and Zhang, Bike and Darrell, Trevor and Malik, Jitendra and Sreenath, Koushil},
  journal={Science Robotics},
  volume={9},
  number={89},
  pages={eadi9579},
  year={2024},
  publisher={American Association for the Advancement of Science}
}

@article{nakamoto2024steering,
  author    = {Mitsuhiko Nakamoto and Oier Mees and Aviral Kumar and Sergey Levine},
  title     = {Steering Your Generalists: Improving Robotic Foundation Models via Value Guidance},
  journal   = {Conference on Robot Learning (CoRL)},
  year      = {2024},
}

@inproceedings{yin2025spatial,
  title={Spatial mental modeling from limited views},
  author={Yin, Baiqiao and Wang, Qineng and Zhang, Pingyue and Zhang, Jianshu and Wang, Kangrui and Wang, Zihan and Zhang, Jieyu and Chandrasegaran, Keshigeyan and Liu, Han and Krishna, Ranjay and others},
  booktitle={Structural Priors for Vision Workshop at ICCV'25},
  year={2025}
}

@inproceedings{li2022blip,
  title={Blip: Bootstrapping language-image pre-training for unified vision-language understanding and generation},
  author={Li, Junnan and Li, Dongxu and Xiong, Caiming and Hoi, Steven},
  booktitle={International conference on machine learning},
  pages={12888--12900},
  year={2022},
  organization={PMLR}
}

@article{oord2018representation,
  title={Representation learning with contrastive predictive coding},
  author={Oord, Aaron van den and Li, Yazhe and Vinyals, Oriol},
  journal={arXiv preprint arXiv:1807.03748},
  year={2018}
}

@article{chung2024scaling,
  title={Scaling instruction-finetuned language models},
  author={Chung, Hyung Won and Hou, Le and Longpre, Shayne and Zoph, Barret and Tay, Yi and Fedus, William and Li, Yunxuan and Wang, Xuezhi and Dehghani, Mostafa and Brahma, Siddhartha and others},
  journal={Journal of Machine Learning Research},
  volume={25},
  number={70},
  pages={1--53},
  year={2024}
}

@inproceedings{li2023blip,
  title={Blip-2: Bootstrapping language-image pre-training with frozen image encoders and large language models},
  author={Li, Junnan and Li, Dongxu and Savarese, Silvio and Hoi, Steven},
  booktitle={International conference on machine learning},
  pages={19730--19742},
  year={2023},
  organization={PMLR}
}

@inproceedings{xue2025blip,
  title={Blip-3: A family of open large multimodal models},
  author={Xue, Le and Shu, Manli and Awadalla, Anas and Wang, Jun and Yan, An and Purushwalkam, Senthil and Zhou, Honglu and Prabhu, Viraj and Dai, Yutong and Ryoo, Michael S and others},
  booktitle={Proceedings of the IEEE/CVF International Conference on Computer Vision},
  pages={6124--6135},
  year={2025}
}

@article{guo2025deepseek,
  title={Deepseek-r1: Incentivizing reasoning capability in llms via reinforcement learning},
  author={Guo, Daya and Yang, Dejian and Zhang, Haowei and Song, Junxiao and Zhang, Ruoyu and Xu, Runxin and Zhu, Qihao and Ma, Shirong and Wang, Peiyi and Bi, Xiao and others},
  journal={arXiv preprint arXiv:2501.12948},
  year={2025}
}

@article{zhang2024countercurate,
  title={Countercurate: Enhancing physical and semantic visio-linguistic compositional reasoning via counterfactual examples},
  author={Zhang, Jianrui and Cai, Mu and Xie, Tengyang and Lee, Yong Jae},
  journal={arXiv preprint arXiv:2402.13254},
  year={2024}
}

@inproceedings{guan2024hallusionbench,
  title={Hallusionbench: an advanced diagnostic suite for entangled language hallucination and visual illusion in large vision-language models},
  author={Guan, Tianrui and Liu, Fuxiao and Wu, Xiyang and Xian, Ruiqi and Li, Zongxia and Liu, Xiaoyu and Wang, Xijun and Chen, Lichang and Huang, Furong and Yacoob, Yaser and others},
  booktitle={Proceedings of the IEEE/CVF Conference on Computer Vision and Pattern Recognition},
  pages={14375--14385},
  year={2024}
}

@article{hsieh2023sugarcrepe,
  title={Sugarcrepe: Fixing hackable benchmarks for vision-language compositionality},
  author={Hsieh, Cheng-Yu and Zhang, Jieyu and Ma, Zixian and Kembhavi, Aniruddha and Krishna, Ranjay},
  journal={Advances in neural information processing systems},
  volume={36},
  pages={31096--31116},
  year={2023}
}

@article{bai2025qwen2,
  title={Qwen2. 5-vl technical report},
  author={Bai, Shuai and Chen, Keqin and Liu, Xuejing and Wang, Jialin and Ge, Wenbin and Song, Sibo and Dang, Kai and Wang, Peng and Wang, Shijie and Tang, Jun and others},
  journal={arXiv preprint arXiv:2502.13923},
  year={2025}
}

@article{feng2025video,
  title={Video-r1: Reinforcing video reasoning in mllms},
  author={Feng, Kaituo and Gong, Kaixiong and Li, Bohao and Guo, Zonghao and Wang, Yibing and Peng, Tianshuo and Wu, Junfei and Zhang, Xiaoying and Wang, Benyou and Yue, Xiangyu},
  journal={arXiv preprint arXiv:2503.21776},
  year={2025}
}

@article{fan2025grit,
  title={GRIT: Teaching MLLMs to Think with Images},
  author={Fan, Yue and He, Xuehai and Yang, Diji and Zheng, Kaizhi and Kuo, Ching-Chen and Zheng, Yuting and Narayanaraju, Sravana Jyothi and Guan, Xinze and Wang, Xin Eric},
  journal={arXiv preprint arXiv:2505.15879},
  year={2025}
}

@article{sheng2024hybridflow,
  title   = {HybridFlow: A Flexible and Efficient RLHF Framework},
  author  = {Guangming Sheng and Chi Zhang and Zilingfeng Ye and Xibin Wu and Wang Zhang and Ru Zhang and Yanghua Peng and Haibin Lin and Chuan Wu},
  year    = {2024},
  journal = {arXiv preprint arXiv: 2409.19256}
}

@inproceedings{tan2024koala,
  title={Koala: Key frame-conditioned long video-llm},
  author={Tan, Reuben and Sun, Ximeng and Hu, Ping and Wang, Jui-hsien and Deilamsalehy, Hanieh and Plummer, Bryan A and Russell, Bryan and Saenko, Kate},
  booktitle={Proceedings of the IEEE/CVF Conference on Computer Vision and Pattern Recognition},
  pages={13581--13591},
  year={2024}
}

@article{luo2025gui,
  title={Gui-r1: A generalist r1-style vision-language action model for gui agents},
  author={Luo, Run and Wang, Lu and He, Wanwei and Chen, Longze and Li, Jiaming and Xia, Xiaobo},
  journal={arXiv preprint arXiv:2504.10458},
  year={2025}
}

@article{tian2025ego,
  title={Ego-R1: Chain-of-Tool-Thought for Ultra-Long Egocentric Video Reasoning},
  author={Tian, Shulin and Wang, Ruiqi and Guo, Hongming and Wu, Penghao and Dong, Yuhao and Wang, Xiuying and Yang, Jingkang and Zhang, Hao and Zhu, Hongyuan and Liu, Ziwei},
  journal={arXiv preprint arXiv:2506.13654},
  year={2025}
}

@article{sarch2025grounded,
  title={Grounded reinforcement learning for visual reasoning},
  author={Sarch, Gabriel and Saha, Snigdha and Khandelwal, Naitik and Jain, Ayush and Tarr, Michael J and Kumar, Aviral and Fragkiadaki, Katerina},
  journal={arXiv preprint arXiv:2505.23678},
  year={2025}
}

@book{miettinen1999nonlinear,
  title={Nonlinear multiobjective optimization},
  author={Miettinen, Kaisa},
  volume={12},
  year={1999},
  publisher={Springer Science \& Business Media}
}

@article{ni2025point,
  title={Point-rft: Improving multimodal reasoning with visually grounded reinforcement finetuning},
  author={Ni, Minheng and Yang, Zhengyuan and Li, Linjie and Lin, Chung-Ching and Lin, Kevin and Zuo, Wangmeng and Wang, Lijuan},
  journal={arXiv preprint arXiv:2505.19702},
  year={2025}
}

\clearpage
\appendix

In this supplemental, we provide the following additional material to the main paper:
\begin{enumerate}
    \item[A] Implementation details for training and evaluation
    \item[B] Agentic evaluation benchmark details
        \begin{enumerate}
            \item Spatial Reasoning Benchmarks
            \item Visual Hallucination Benchmarks
            \item Embodied AI and Robotics Benchmarks
        \end{enumerate}
    \item[C] Additional details on data curation pipeline
    \begin{enumerate}
            \item Image reasoning trace examples
            \item Video reasoning trace examples
        \end{enumerate}
    \item[D] Adaptive Verifier Scoring functions and prompts
        \begin{enumerate}
            \item Adaptive Selection of Scoring Functions
            \item Spatial Grounding Reward for Images
            \item Spatiotemporal Grounding Reward for Videos 
        \end{enumerate}
    \item[E] Full theoretical proof of Pareto-optimal solutions
    \item[F] Qualitative visualizations
\end{enumerate}

\section{Additional implementation details for training and evaluation}
We train our SFT and RL model variants using the AdamW optimizer with a 
learning rate of $1e^{-5}$, and 256 batch size for SFT and 56 for RL. We observe that training converges in around 1000 steps for SFT and 80 steps for RL. We implement our two-stage training pipeline using PyTorch with a combination of  8×H100 and 8xA100 40GB 
GPUs based on the Easy-R1 training framework~\cite{sheng2024hybridflow}. During evaluation, we use a maximum of 6144 or 8192 new tokens and a temperature value of 0.6.

\section{Agentic evaluation benchmarks}
In our experiment section, we mainly evaluate on several benchmarks that evaluate the agentic reasoning capabilities of our model on multiple tasks in diverse domains including visual hallucination, spatial intelligence and embodied AI. Here, we dive deeper into the setup and composition of these aforementioned benchmarks. 

\paragraph{Visual Hallucination.} Visual hallucination~\cite{liu2023hallusionbench,zhang2024countercurate} has been a critical limitation of using reasoning and non-reasoning based LMMs as AI agents. When a LMM hallucinate visual details such as non-existent objects or spurious relationships, it reduces the confidence of each subsequent decision since there might be limited relevance between perception, reasoning and planned actions. This is especially problematic in interactive settings where agents have to follow instructions to complete physical tasks or navigate GUI elements. Consequently, robustness to visual hallucination is a prerequisite for the deployment of multimodal AI agents. Thus, we evaluate the effectiveness of our model trained with the adaptive verifier to remain grounded in the visual scene. We use the accuracy metric for all visual hallucination benchmarks. 

The first benchmark is \textbf{CounterCurate}~\cite{zhang2024countercurate}, which contains hard positive and negative image-caption pairs for understanding ambiguous spatial relations. It is curated from the Flickr30K dataset and uses the GPT-4V and DALLE 3 models to generate semantically plausible but incorrect captions and images. We also evaluate on a visual-context  \textbf{HallusionBench}~\cite{liu2023hallusionbench}, which is aimed at evaluating the tendency of LMMs to be affected by language hallucination and visual illusion. It contains 346 images and 1129 questions that are authored by humans. Additionally, the questions are further categorized into those that are \emph{visual-dependent} and \emph{visual-supplement}. Finally, the \textbf{SugarCrepe}~\cite{hsieh2023sugarcrepe} benchmark is introduced to evaluate LMMs' capability to understand fine-grained vision-language compositionality. It contains 7512 questions to systematically cover 7 types of hard negatives including the replacement and addition of objects. 

\begin{figure*}[t]
\begin{AIBox}{Reward Verifier Instruction Template (Part 1)}
\small
\smallskip

You are an \textbf{agentic reward verifier} used in reinforcement learning. For each single sample, you must \textbf{plan which reward functions to run and in what order} to evaluate the predicted response. Each sample includes a question, a predicted response, a ground-truth answer, and visual inputs that may be a single image, a list of images, or one or more videos. Decide which functions to invoke so the predicted response is scored appropriately.

\textbf{REWARD FUNCTIONS YOU MAY CALL (names are case-sensitive):}
\begin{enumerate}
    \item \texttt{"visual\_grounding"}: evaluates whether the predicted 2D point coordinates accurately localize the objects or noun phrases referenced in the reasoning thoughts and final answer in natural images or video frames (e.g., everyday scenes, people, animals, COCO-like objects). The prediction is considered correct when the points correspond to the actual visual location of the target object, ensuring semantic consistency with the visual evidence rather than relying only on textual alignment.

    \item \texttt{"visual\_grounding\_synthetic"}: evaluates whether the predicted 2D point coordinates accurately localize referenced entities in synthetic imagery (e.g., math figures, charts/plots, tables, documents, UI screenshots, diagrams). The prediction is considered correct when the points correspond to the actual visual location of the target object, ensuring semantic consistency with the visual evidence rather than relying only on textual alignment.

    \item \texttt{"semantic\_accuracy"}: evaluates whether the predicted response preserves the same meaning as the ground-truth response, regardless of differences in phrasing or wording. This function measures the degree of semantic equivalence between the two answers rather than exact string matching.

    \item \texttt{"reasoning\_quality"}: evaluates the soundness and relevance of the reasoning process that leads from the question and visual inputs to the predicted response. Higher-quality reasoning is characterized by logical consistency, self-reflection, possible backtracking, correct use of the provided information, and minimal reliance on unsupported assumptions.

    \item \texttt{"exact\_match\_accuracy"}: evaluates whether the predicted response exactly matches the ground-truth response. This scoring function is used for tasks where the answer space is discrete and unambiguous, such as binary choice questions (e.g., yes/no), multiple-choice questions (where the answer must match one of the provided options), and object or event counting tasks (where the predicted numeric value must exactly equal the ground-truth count).

    \item \texttt{"relative\_tolerance\_accuracy"}: a soft-matching reward (1 or 0) that evaluates whether the predicted numeric response falls within an acceptable range of the ground-truth value. The prediction is considered correct (reward = 1) if it lies within $\pm 5\%$ of the ground-truth value; otherwise, the reward is 0. This metric is designed for tasks where answers are real-valued numbers and minor deviations from the exact value should be tolerated, such as measurements, ratios, probabilities, or other continuous-valued predictions.
\end{enumerate}

\end{AIBox}
\end{figure*}

\begin{figure*}[t]
\begin{AIBox}{Reward Verifier Instruction Template (Part 2)}
\small
\smallskip

\textbf{INPUTS (provided below):}
\begin{enumerate}
    \item \texttt{question}: string
    \item \texttt{gt\_response}: string
    \item \texttt{pred\_response}: string
    \item \texttt{reasoning\_response}: string
    \item \texttt{visual\_inputs}: JSON list like \texttt{["/abs/path/img\_001.png", "/abs/path/clip\_003.mp4", ...]}
\end{enumerate}

\textbf{AGENTIC PLANNING OBJECTIVES:}
\begin{enumerate}
    \item Infer the likely task type from the question/answers (e.g., counting, presence, attribute, spatial relation, open QA, binary QA, numeric estimation) without restating it.

    \item Choose an \textbf{ordered set} of reward functions to run for this sample.
    \begin{enumerate}
        \item Use \textbf{one} visual grounding function in almost all cases.
        \begin{enumerate}
            \item Prefer \texttt{"visual\_grounding"} for natural images/videos.
            \item Prefer \texttt{"visual\_grounding\_synthetic"} for synthetic/diagrammatic content (math figures, charts, tables, documents, UI, etc.).
        \end{enumerate}

        \item Augment with \texttt{"semantic\_accuracy"} when \texttt{gt\_response} and \texttt{pred\_response} are non-empty.

        \item Consider adding \texttt{"exact\_match\_accuracy"} for discrete answer spaces (yes/no, multiple choice, exact counts).

        \item Consider \texttt{"relative\_tolerance\_accuracy"} when the expected answer is continuous-valued and small numeric deviations should be tolerated.

        \item Optionally add \texttt{"reasoning\_quality"} for tasks where rationale fidelity matters.
    \end{enumerate}

    \item Provide a brief per-step rationale for why each chosen function is necessary \textbf{for this sample}.
\end{enumerate}

\end{AIBox}
\end{figure*}

\begin{figure*}[t]
\begin{AIBox}{Reward Verifier Instruction Template (Part 3)}
\small
\smallskip

\textbf{REQUIRED RULES:}
\begin{enumerate}
    \item Return ONLY a single JSON object with the exact schema in OUTPUT SCHEMA. No extra text.

    \item Never invent scoring function names. Use only the names listed above.

    \item Preconditions:
    \begin{enumerate}
        \item \texttt{"visual\_grounding"} OR \texttt{"visual\_grounding\_synthetic"}: \texttt{visual\_inputs} is a list with one or more non-empty paths. Choose exactly ONE of these two; prefer \texttt{"visual\_grounding"} for natural images/videos, and \texttt{"visual\_grounding\_synthetic"} for synthetic/diagrammatic content (figures, charts, tables, docs, UI).

        \item \texttt{"semantic\_accuracy"} requires non-empty predicted and ground-truth responses.

        \item \texttt{"exact\_match\_accuracy"}: task appears discrete/unambiguous (e.g., yes/no, multiple-choice label, exact count) AND both responses non-empty.

        \item \texttt{"relative\_tolerance\_accuracy"}: task appears numeric/continuous AND both responses non-empty.

        \item \texttt{"reasoning\_quality"}: \texttt{reasoning\_response} is non-empty (or present) and pertains to the prediction.
    \end{enumerate}

    \item Default policy (planning guidance):
    \begin{enumerate}
        \item If visual preconditions hold, include exactly one visual grounding function FIRST.

        \item Then, pick \textbf{exactly one} from:
        \texttt{["semantic\_accuracy", "exact\_match\_accuracy", "relative\_tolerance\_accuracy"]}
        based on the question type and expected response.
        \begin{enumerate}
            \item Use \texttt{"exact\_match\_accuracy"} for discrete answers (yes/no, MCQ label, exact integer count).
            \item Use \texttt{"relative\_tolerance\_accuracy"} for continuous/numeric estimates.
            \item Use \texttt{"semantic\_accuracy"} for open-form textual answers where meaning equivalence matters.
        \end{enumerate}
    \end{enumerate}

    \item Planning constraints:
    \begin{enumerate}
        \item \texttt{"functions"} must be an ordered list of unique names (no duplicates).
        \item Keep plans minimal: include only functions justified by this sample’s inputs and task.
    \end{enumerate}
\end{enumerate}

\end{AIBox}
\end{figure*}

\begin{figure*}[t]
\begin{AIBox}{Reward Verifier Instruction Template (Part 4)}
\small
\smallskip

\textbf{OUTPUT SCHEMA (strict):}

Return a JSON object with a single key \texttt{"functions"} whose value is an ordered array of 1..3 unique function names, each drawn from:
\texttt{["visual\_grounding", "visual\_grounding\_synthetic", "semantic\_accuracy", "exact\_match\_accuracy", "relative\_tolerance\_accuracy", "reasoning\_quality"]}.

\begin{Verbatim}[breaklines=true, breakanywhere=true, fontsize=\small]
{
  "functions": ["<chosen_name_1>", "<chosen_name_2>", "<chosen_name_3_optional>"]
}
\end{Verbatim}

Important: The array above is an \textbf{example of the format only}. Do \textbf{not} copy those names verbatim. Choose the functions based on your agentic planning for THIS sample.

\textbf{SAMPLE TO EVALUATE:}
\begin{enumerate}
    \item \texttt{question}: \{question\}
    \item \texttt{gt\_response}: \{gt\_response\}
    \item \texttt{pred\_response}: \{pred\_response\}
    \item \texttt{reasoning\_response}: \{reasoning\_response\}
    \item \texttt{visual\_inputs}: \{visual\_inputs\}
\end{enumerate}

\end{AIBox}
\end{figure*}

\begin{figure*}[t]
\centering
\begin{AIBox}{Image Instruction Template}
\small
\smallskip
\textbf{You are given:}
\begin{enumerate}
  \item The original question: \{question\}
  \item The associated image (if any).
  \item A model-generated answer: \{generated\_text\}
\end{enumerate}

\textbf{TASK}\\
Extract a concise set of referenced \textbf{entities, objects, and interactions}
that are \textbf{visibly present in the image}. Focus on the \textbf{main objects}
and their key visible attributes. When descriptive expressions are used, prefer
\textbf{expressive but non-redundant terms} that capture the important visible
features (e.g., ``seedling with roots'' instead of just ``seedling'').

\textbf{ENTITY RULES}
\begin{itemize}
  \item Entities should be \textbf{specific and expressive}, not overly vague.
  \item Avoid redundant variants that describe the same object with slightly
        different wording (e.g., do not include both ``seedling with roots''
        and ``early growth with roots and leaves''; pick the clearest and most
        representative one).
  \item Merge synonyms/aliases into one expressive label.
  \item Include distinct parts (like ``roots'', ``leaves'', ``soil'') only if
        they are visually important on their own, not just mentioned as part of
        a longer phrase.
  \item If more than 10 entities are visible, include only the
        \textbf{10 most salient} ones (salient = important, central, or
        repeatedly emphasized).
\end{itemize}

\textbf{INTERACTION RULES}
\begin{itemize}
  \item Only include interactions that are visually supported (e.g., arrows,
        measurable processes, progressive changes).
  \item Express them concisely but clearly (e.g., ``growth progression'',
        ``roots growing in soil'').
  \item Merge duplicates or semantically equivalent variants.
\end{itemize}

\textbf{OUTPUT FORMAT}\\
Return \textbf{only} this JSON object (no explanations, no extra text):
\begin{Verbatim}[breaklines=true, breakanywhere=true, fontsize=\small]
{
  "entities": [up to 10 expressive, non-redundant entities/objects visible in the image],
  "interactions": [unique interactions visually supported by the image, if any]
}
\end{Verbatim}
\end{AIBox}
\caption{Instruction template for extracting  entities and interactions from generated image-level rollouts.}
\label{fig:image-instruction-entity-extraction}
\end{figure*}

\begin{figure*}[t]
\centering
\begin{AIBox}{Video Instruction Template}
\small
\smallskip
\textbf{You are given:}
\begin{enumerate}
  \item The original question: \{question\}
  \item The associated video frames.
  \item A model-generated answer: \{generated\_text\}
\end{enumerate}

\textbf{FRAME TIMELINE}
\begin{itemize}
  \item You are given \{num\_frames\} frames with timestamps (seconds) in chronological order:\\
        \{formatted\_frame\_timestamps\}
  \item Use timestamps to infer ordering and durations. When reporting actions/interactions,
        include a time range like \(t_{\text{start}}\rightarrow t_{\text{end}}\) based on visible evidence.
  \item If frames are discontinuous or sparsely sampled, reason only over the provided times.
\end{itemize}

\textbf{TASK}\\
From the VIDEO FRAMES, extract a concise set of referenced
\textbf{entities, objects, actions, and interactions} that are
\textbf{visibly supported in the video} and \textbf{relevant to the question and answer}.
Emphasize \textbf{spatiotemporal evidence}: movements, changes over time, ordering
(before\(\rightarrow\)after), and persistent states across frames. If no video is present,
fall back to static visual evidence.

\textbf{ENTITY RULES}
\begin{itemize}
  \item Entities should be \textbf{specific and expressive}, not overly vague.
  \item Avoid redundant variants describing the same object; merge synonyms/aliases into one expressive label.
  \item Include distinct parts (e.g., ``roots'', ``leaves'', ``soil'') only if \textbf{visually important}
        on their own or central to the Q/A.
  \item If $>$10 entities are visible, include only the \textbf{10 most salient}
        (important, central, or repeatedly emphasized for the Q/A).
\end{itemize}

\textbf{ACTION \& INTERACTION RULES (SPATIOTEMPORAL)}
\begin{itemize}
  \item Prefer \textbf{motion/change verbs} and \textbf{state transitions}
        (e.g., ``seedling grows'', ``cup tilts'', ``person picks up book'').
  \item Capture \textbf{temporal structure} when visible: start/end states, increasing/decreasing,
        repetition (``x3''), or ordering (``A\(\rightarrow\)B'').
  \item Distinguish \textbf{object motion vs.\ camera motion}; do not attribute camera pans/zooms to objects.
\end{itemize}
\end{AIBox}
\caption{Instruction template for extracting entities and interactions from generated video-level rollouts (Part 1).}
\label{fig:video-instruction-entity-extraction}
\end{figure*}

\begin{figure*}[t]
\ContinuedFloat
\centering
\begin{AIBox}{Video Instruction Template (continued)}
\small
\smallskip

\textbf{ACTION \& INTERACTION RULES (SPATIOTEMPORAL) continued}
\begin{itemize}
  \item Only include interactions/actions that are \textbf{visually grounded across frames}
        and \textbf{support the Q/A}.
  \item Express concisely (e.g., ``roots growing in soil'',
        ``ball rolls left\(\rightarrow\)right and stops''); merge duplicates or equivalents.
\end{itemize}

\textbf{RELEVANCE FILTER}
\begin{itemize}
  \item Include only entities/actions/interactions that \textbf{help answer the question}
        or \textbf{support the given answer}; omit irrelevant background details.
\end{itemize}

\textbf{OUTPUT FORMAT}\\
Return \textbf{only} this JSON object (no explanations, no extra text):
\begin{Verbatim}[breaklines=true, breakanywhere=true, fontsize=\small]
{
  "entities": [up to 10 expressive, non-redundant entities/objects visible in the video],
  "actions": [concise motion/change events grounded across frames, if any.
              Encode their time ranges as (start_time=Xs, end_time=Ys)],
  "interactions": [unique spatiotemporal relations expressed in concise natural English,
                   if any]
}
\end{Verbatim}
\end{AIBox}
\caption{Instruction template for extracting entities and interactions from generated video-level rollouts (Part 2).}
\end{figure*}

\begin{figure*}[ht]
    \centering
    \begin{subfigure}{0.48\textwidth}
        \centering
        \includegraphics[width=\linewidth]{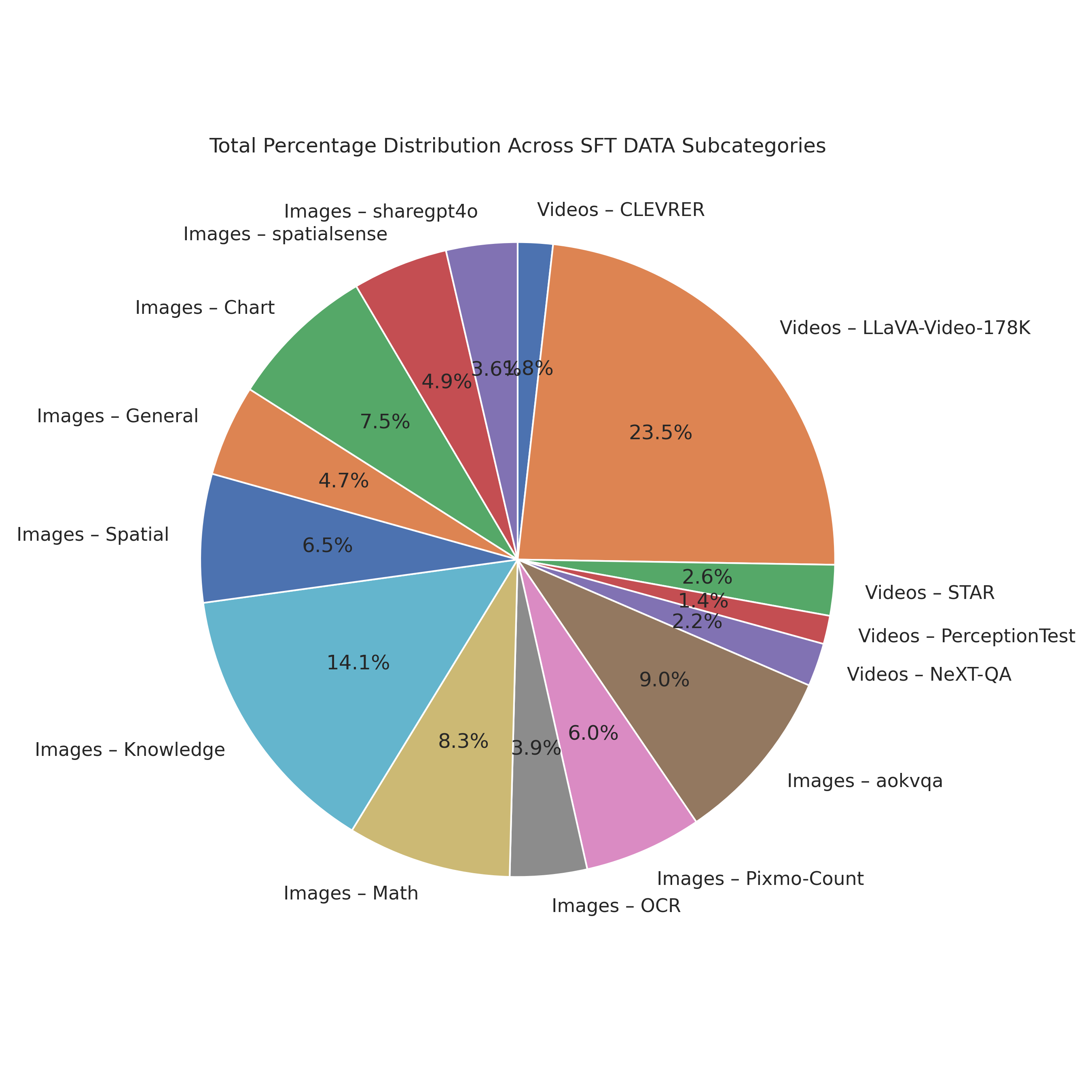}
        \label{fig:sft-data-mix}
    \end{subfigure}
    \hfill
    \begin{subfigure}{0.48\textwidth}
        \centering
        \includegraphics[width=\linewidth]{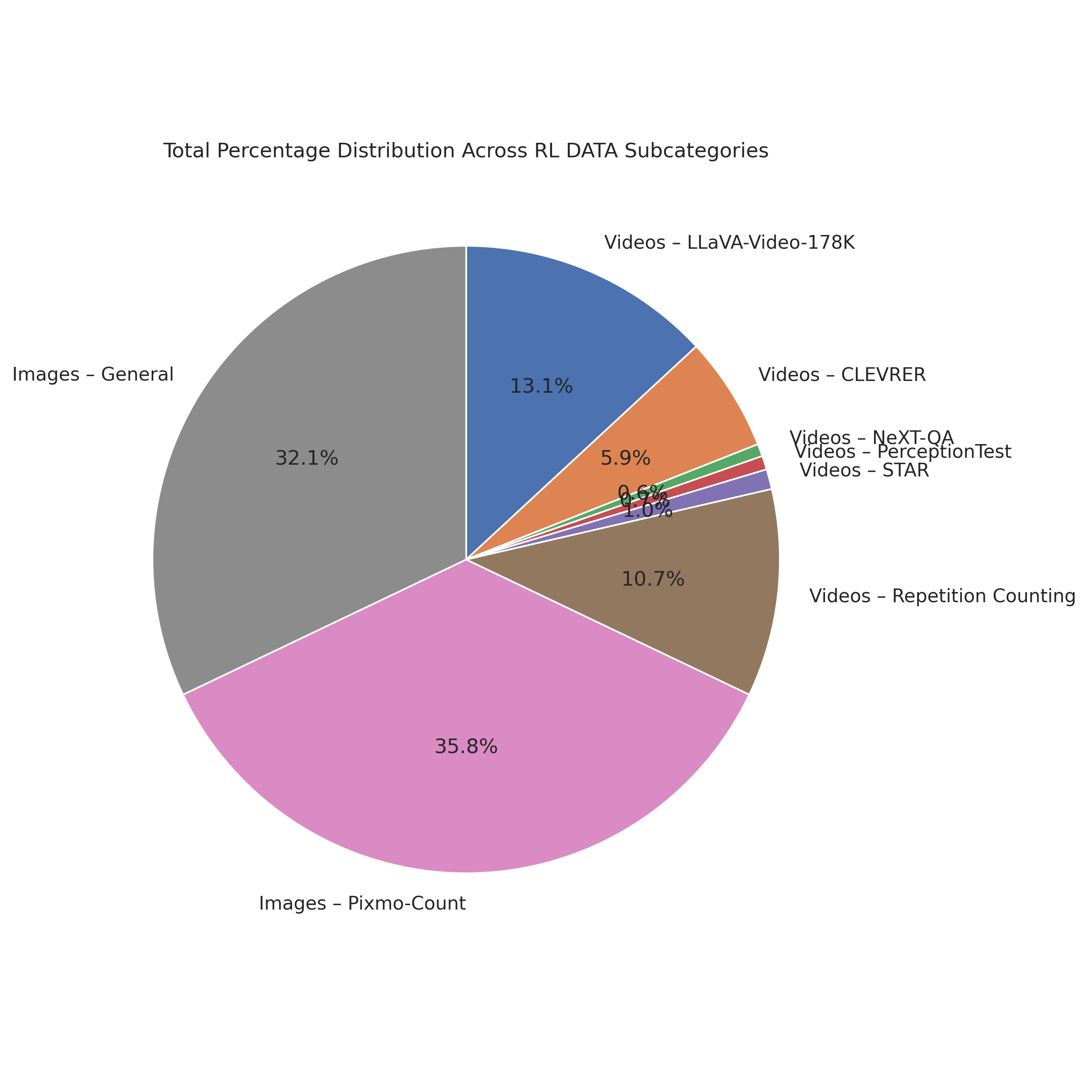}
        \label{fig:rl-data-mix}
    \end{subfigure}
    \caption{\textbf{Training data mixtures.} We use around 85K and 4.6K SFT and RL training samples, respectively.}
    \label{fig:combined-data-mix}
\end{figure*}

\paragraph{Spatial Reasoning.} Besides robustness to visual hallucination, spatial reasoning is a core capability for multimodal agents to interact effectively in the physical world. AI agents have to understand where objects are and the relationships between them in space to generate feasible and accurate responses. We report performance using the accuracy metric since all benchmarks are based on multiple choice-questions.

First, we begin by evaluating on the \textbf{BLINK}~\cite{fu2024blink} benchmark, which contains around 3.8K questions that are generated from 7.3K images. This benchmark sources questions from 14 classic computer vision tasks including depth estimation and visual similarity. Second, we use the tiny version of the MindCube~\cite{yin2025spatial} dataset, which is one of the most recent spatial reasoning benchmarks. It is intended to test the ability of LMMs to construct internal spatial mental models from a few visual perspectives. The full benchmark contains around 21K questions, which are categorized into the ``Around'', ``Among'' and ``Rotation'' types. Due to the scale of the data, we evaluate on the tiny version, which is a smaller but representative subset spanning the same spatial settings. Finally, we also evaluate on the Cambrian Vision-Centric Benchmark (CV-Bench)~\cite{tong2024cambrian}, which consists of about 2.6K manually-inspected evaluation samples. These questions evaluate LMMs' 2D and 3D spatial understanding capability. 

\paragraph{Embodied AI and Robotics.}
Last but not least, we also evaluate on agentic task planning and completion using two popular benchmarks. The first benchmark, EmbodiedBench~\cite{yangembodiedbench}, is a dataset that spans multiple environments such as AI2-THOR and Habitat 2.0. In our experiments, we mainly focus on the high-level task completion setting. The EB-Alfred sub-task evaluates an agent’s ability to compose and execute high-level skills like pick up, open, slice, and find objects over diverse household tasks. It contains approximately 300 test samples evenly split across the categories of base, commonsense, complex instructions, visual appearance, spatial awareness and long-horizon. In contrast, the EB-Habitat evaluation task focuses on high-level object rearrangement based on language queries. Finally, we also evaluate on the Libero evaluation suite~\cite{liu2023libero}, which is a dataset for language-based robot manipulation. It mainly consists of Libero-Spatial, Libero-Object, Libero-Goal and Libero-100. The Libero-100 subset is further split into Libero-90 which contains 90 short-horizon tasks and 10 long-horizon tasks in Libero-Long.

\section{Additional details on data curation}
\label{sec:suppl-data-curation}
In this section, we present more specific details about our entire data curation pipeline starting from rollout generation to the final verification process.

\subsection{Data preparation}
Our data preparation process consists of multiple steps. In Section 3, we briefly describe our process to extract relevant objects, interactions and events in images and videos. In this section, we augment that description by providing the specific prompts that we used to query GPT-4o to extract the relevant meta-information from the question, ground-truth response and original generated rollouts. The prompt template used for image samples (Figure~\ref{fig:image-instruction-entity-extraction}) is almost identical to that used for video samples (Figure~\ref{fig:video-instruction-entity-extraction}), except that we also extract information about actions that span multiple frames. In both cases, the LM will return a JSON dictionary where the extracted information 00is stored in the relevant key-value pairs. Based on the extracted objects and interactions, we use the Molmo-7B model to generate their 2D point coordinates. Note that Molmo-7B normalizes the 2D coordinates between 0 and 100 while the base Qwen2.5-VL 7B model represents pixel coordinates in absolute units. After rescaling the generated pixel coordinates, we overlay these coordinates on images and sampled video frames. 

\begin{figure*}[t]
\centering
\begin{AIBox}{Point–Object Extraction Prompt}
\small
You will be given a multimodal reasoning trace that includes text with
references to visual observations and 2D points in \((x, y)\) format.

Your task is to extract all 2D point coordinates in the text, and for each
one, identify the most semantically relevant noun phrase, object mention, or
referring expression that is closely associated with that point. This may
include specific items seen in the image, categories, or localized objects
mentioned in the same or nearby sentences.

Return your output as a list of mappings in the following format:

\begin{verbatim}
[
  {"point": [x, y], "object": "most relevant noun phrase or object"},
  ...
]
\end{verbatim}

Do not include any other text.

\textbf{Guidelines:}
\begin{itemize}
  \item Only consider 2D points mentioned explicitly in the text
        (e.g., (190, 247)).
  \item Prefer noun phrases that refer to concrete objects or regions
        localized in the image (e.g., ``notebooks'', ``mechanical pencils'').
  \item If multiple possible objects are nearby in the text, pick the one
        most likely referred to by the point based on sentence structure,
        context, or positional cues.
  \item Do not hallucinate new coordinates or object names not mentioned in
        the original text.
  \item For each point, ensure the associated object is specific and
        descriptive, avoiding vague terms like ``thing'' or ``item''.
        It should never be null.
\end{itemize}

Here is the reasoning trace: \{text\}
\end{AIBox}
\caption{We use this prompt template to query a LM to extract the generated 2D points and their corresponding object or noun phrase.}
\label{fig:point-extraction-prompt}
\end{figure*}

\begin{figure*}[t]
\centering
\begin{AIBox}{Video Point--Event Extraction Prompt (Part I)}
\footnotesize
You will be given a multimodal \textbf{video} reasoning trace that may include:
\begin{itemize}
  \item Explicit 2D points written as \texttt{(x, y)} or \texttt{[x, y]}
  \item Temporal anchors like \texttt{[F<id> @ t=<seconds>s]}
  \item Temporal lists like \texttt{[F1 @ t=0.00s, F2 @ t=1.66s, F3 @ t=3.31s, ...]}
  \item Temporal spans like \texttt{[F<start>-F<end> @ t=<start>-<end>s]}
  \item Mentions of objects (nouns/referring expressions), actions/events,
        and single-frame observations
\end{itemize}

\textbf{YOUR TASK}\\
Return \textbf{STRICT JSON} with three top-level arrays:
\texttt{"observations"}, \texttt{"events"}, \texttt{"points"}.
Map extracted items as follows (this mapping is REQUIRED):
\begin{itemize}
  \item \textbf{"points"}: ONLY items that are a point+anchor
        \textbf{as an exact substring} in the text $\to$
        \texttt{"(x, y) in [Fk @ t=Ts]"} or \texttt{"[Fk @ t=Ts] at (x, y)"}.
  \item \textbf{"events"}: ONLY multi-frame spans
        \texttt{"[Fstart-Fend @ t=Tstart-Tend]"} or temporal lists
        \texttt{"[F1 @ t=T1, F2 @ t=T2, ...]"}.
  \item \textbf{"observations"}: ONLY standalone single anchors
        \texttt{"[Fk @ t=Ts]"} (no point in same substring) or standalone
        points \texttt{"(x, y)"} / \texttt{"[x, y]"} (no timestamp in same substring).
\end{itemize}

\textbf{ANCHOR TEXT --- MUST BE AN EXACT SUBSTRING (VERBATIM)}\\
For every record, return \texttt{"anchor\_text"} as a single string that is an
exact substring copied verbatim from the provided text. Do \emph{not} rewrite
or synthesize connectors.

Allowed formats for \texttt{"anchor\_text"} (choose exactly one per record):
\begin{itemize}
  \item \texttt{"(x, y) in [Fk @ t=Ts]"} or \texttt{"[Fk @ t=Ts] at (x, y)"} (goes to \texttt{"points"})
  \item \texttt{"[Fstart-Fend @ t=Tstart-Tend]"} (goes to \texttt{"events"})
  \item \texttt{"[F1 @ t=T1, F2 @ t=T2, F3 @ t=T3, ...]"} (goes to \texttt{"events"})
  \item \texttt{"[Fk @ t=Ts]"} (goes to \texttt{"observations"})
  \item \texttt{"(x, y)"} or \texttt{"[x, y]"} (goes to \texttt{"observations"})
\end{itemize}

\textbf{CRITICAL EXAMPLE (exactness requirement)}\\
Source text: For example, ``In \texttt{[F17 @ t=19.21s]}, pallbearers at
coordinates \texttt{(140, 679)}, \texttt{(290, 554)}, and \texttt{(654, 544)}
maintain a steady stance.''

Valid \texttt{"anchor\_text"}:
\begin{itemize}
  \item \texttt{"[F17 @ t=19.21s]"} (observation)
  \item \texttt{"(140, 679)"} (observation)
  \item \texttt{"(290, 554)"} (observation)
  \item \texttt{"(654, 544)"} (observation)
\end{itemize}
\end{AIBox}
\caption{We use this prompt template to query a LM to extract the generated 2D points and their corresponding object or noun phrase, as well as timestamps for video reasoning traces.}
\label{fig:video-point-extraction-prompt}
\end{figure*}

\begin{figure*}[t]
\ContinuedFloat
\centering
\begin{AIBox}{Video Point--Event Extraction Prompt (Part II)}
\footnotesize

\textbf{STRUCTURES TO RETURN}

\textbf{1) OBSERVATIONS (ONLY Format 4 or 5)}
\begin{verbatim}
{
  "anchor_text": "<exact substring (Format 4 or 5)>",
  "frame": Fk_or_null,
  "time_s": Ts_or_null,
  "description": "<short phrase nearest to anchor_text describing /
  what is visible/occurring>",
  "object": "<noun phrase/object or null>",
  "points": [[x, y], ...]
}
\end{verbatim}

\textbf{2) EVENTS (ONLY Format 2 or 3)}
\begin{verbatim}
{
  "anchor_text": "<exact substring (Format 2 or 3)>",
  "event": "<action/event phrase>",
  "frames": [F_start, F_end],
  "times_s": [t_start, t_end],
  "points": [[x, y], ...]
}
\end{verbatim}

\textbf{3) POINTS (ONLY Format 1)}
\begin{verbatim}
{
  "anchor_text": "<exact substring (Format 1)>",
  "point": [x, y],
  "object": "<most relevant noun phrase or object>",
  "frame": Fk,
  "time_s": Ts
}
\end{verbatim}

\textbf{RETURN FORMAT (STRICT JSON ONLY; NO EXTRA TEXT)}\\
Return only valid JSON with exactly three top-level keys:
\texttt{"observations"}, \texttt{"events"}, and \texttt{"points"}.

\textbf{GUIDELINES}
\begin{itemize}
  \item Do not output any explanation before or after the JSON.
  \item Each record must use exactly one allowed \texttt{"anchor\_text"} format.
  \item \texttt{"anchor\_text"} must be copied verbatim from the source text.
  \item Put an item in \texttt{"points"} only if the point and timestamp appear in
        the same exact substring.
  \item Put temporal spans and temporal lists only in \texttt{"events"}.
  \item Put standalone timestamps or standalone points only in \texttt{"observations"}.
\end{itemize}

Here is the reasoning trace: \{text\}

\end{AIBox}
\caption{We use this prompt template to query a LM to extract the generated 2D points and their corresponding object or noun phrase, as well as timestamps for video reasoning traces (continued).}
\end{figure*}

\subsection{Reasoning trace generation}
To generate rollouts, we primarily use the GLM-4.1V 9B reasoning model. As mentioned in the main paper, we prompt the model to ground its reasoning in the original visual input, while using the 2D point coordinates to disambiguate objects in images and to specify frame-level or multi-frame events in videos. For each sample, we generate 8 rollouts with a temperature of 0.6 and a maximum of 6144 new tokens. We use the prompt templates in Figure~\ref{fig:point-extraction-prompt} and \ref{fig:video-point-extraction-prompt} to extract generated 2D points, timestamps and their corresponding descriptions in images and videos, respectively.

\begin{figure*}[t]
\centering
\begin{AIBox}{Video Scoring Prompt (Part 1/2)}
\small
\medskip
You are an information extraction model. Read the reasoning text and extract
three categories of temporally grounded facts. Follow the rules exactly.

\textbf{DEFINITIONS}

\textbf{1) spatiotemporal}
\begin{itemize}
  \item Target: every 2D point tag of the form \texttt{<points ...>...</points>}.
  \item For each \texttt{<points>} tag, produce ONE item with:
  \begin{itemize}
    \item \texttt{"frame"}: the nearest explicitly stated single frame number
          (case-insensitive \texttt{"frame N"}) in the same sentence or immediately
          preceding clause, if present; else null.
    \item \texttt{"time"}: the nearest explicitly stated single absolute time in
          seconds (e.g., \texttt{"36.31 seconds"}, \texttt{"t=36.31s"}, \texttt{"36.31s"})
          in the same sentence or immediately preceding clause, if present; else null.
    \item \texttt{"object"}: the object name from the tag’s inner text if non-empty,
          otherwise from its \texttt{alt} attribute. This has to be an object and not
          timestamps or frame numbers.
    \item \texttt{"x"}: the x coordinate from the tag.
    \item \texttt{"y"}: the y coordinate from the tag.
  \end{itemize}
  \item Do NOT infer values; only use explicitly stated numbers. If both a frame and a
        time are given, include both; otherwise set the missing field to null.
  \item If multiple \texttt{<points>} tags are present near the same mention, output
        one item per tag.
\end{itemize}

\textbf{2) frame-level temporal}
\begin{itemize}
  \item Target: any mention tied to a SINGLE frame (e.g., ``frame 6'') and/or a
        SINGLE time (e.g., ``4.47 seconds''). IMPORTANT: This is similar to the
        spatiotemporal case but used when no \texttt{<points>} tag is present.
        No 2D coordinates are involved here and there should not be any overlap
        with the extracted spatiotemporal points.
  \item Output an item with:
  \begin{itemize}
    \item \texttt{"frame"}: that single frame number, or null if none.
    \item \texttt{"time"}: that single absolute time in seconds, or null if none.
    \item \texttt{"description"}: an EXACT substring copied verbatim from the input
          text that describes what happens at that frame/time. Do NOT paraphrase or
          invent wording; choose the minimal contiguous snippet that fully states
          the observation.
  \end{itemize}
  \item Do NOT include \texttt{<points>} tags themselves inside \texttt{"description"}
        unless they are part of the original wording you copy.
\end{itemize}
\end{AIBox}
\caption{Video scoring prompt used to extract referenced objects and actions in videos (Part 1).}
\label{fig:video-scoring-prompt-part1}
\end{figure*}

\begin{figure*}[t]
\centering
\begin{AIBox}{Video Scoring Prompt (Part 2/2)}
\small

\textbf{3) segment-level temporal}
\begin{itemize}
  \item Target: any mention that spans MULTIPLE frames (e.g., ``frames 1--6'',
        ``frame 7 through frame 20'') and/or a TIME RANGE (e.g., ``4.47 to 16.11
        seconds'').
  \item Output an item with:
  \begin{itemize}
    \item \texttt{"start\_frame"}: first frame number if given, else null.
    \item \texttt{"end\_frame"}: last frame number if given, else null.
    \item \texttt{"start\_time"}: start time in seconds if given, else null.
    \item \texttt{"end\_time"}: end time in seconds if given, else null.
    \item \texttt{"description"}: an EXACT substring copied verbatim from the input
          that describes the segment-level event. Do NOT paraphrase.
  \end{itemize}
  \item Open-ended ranges like ``frame 25 onwards'' are allowed: fill known
        \texttt{start\_*} and set unknown \texttt{end\_*} fields to null.
\end{itemize}

\textbf{NORMALIZATION \& MATCHING RULES}
\begin{itemize}
  \item Frames: integers only. Match case-insensitively for the word
        ``frame'' (e.g., ``Frame 6'', ``frame 6'').
  \item Times: floats in seconds; accept forms like ``36.31 seconds'',
        ``t=36.31s'', ``36.31s''. Output them as numbers (no units) rounded
        to two decimals.
  \item Ranges: recognize hyphen/en dash (``1-6'', ``1--6''), ``from \dots\ to \dots'',
        and ``through''. For ``onwards'', only the \texttt{start\_*} is known.
  \item Association for spatiotemporal points: prefer the closest explicit single
        frame/time in the same sentence or immediately preceding clause. If only
        a range is present and no single value is explicitly tied to the point,
        set both \texttt{"frame"} and \texttt{"time"} to null (do NOT invent).
  \item Deduplicate identical items across lists.
  \item Use null where a field cannot be filled \textbf{without explicit evidence}.
  \item Do NOT hallucinate any text. For the \texttt{"description"} fields in (2)
        and (3), the value MUST be an exact substring found in the input text.
\end{itemize}

\textbf{OUTPUT FORMAT (STRICT)}\\
Return ONLY a JSON object with exactly these keys:
\begin{itemize}
  \item \texttt{"spatiotemporal"}: a list of objects of the form
        \texttt{{"frame": <int|null>, "time": <float|null>, "object": <string>, "x": <string>, "y": <string>}}
  \item \texttt{"frame\_level\_temporal"}: a list of objects of the form
        \texttt{{"frame": <int|null>, "time": <float|null>, "description": <string>}}
  \item \texttt{"segment\_level\_temporal"}: a list of objects of the form
        \texttt{{"start\_frame": <int|null>, "end\_frame": <int|null>, "start\_time": <float|null>, "end\_time": <float|null>, "description": <string>}}
\end{itemize}

If a category has no items, return an empty list for that key.
\smallskip
\textbf{TEXT}\\
\{reasoning\_text\}
\end{AIBox}
\caption{We use this template to prompt the teacher model to extract referenced objects and actions in videos.}
\label{fig:video-scoring-prompt}
\end{figure*}

\begin{figure*}[t]
\centering
\begin{AIBox}{Video frame-level accuracy}
\small
\smallskip
You will receive ONE image (a single video frame) and a short description.

\textbf{TASK}\\
Score how well the description matches ONLY the visible content of the image.
Ignore unverifiable timing info (timestamps/frame indices) in the text; treat
them as neutral metadata. Focus on objects, attributes, and spatial relations
that can be visually verified in this single frame.

\textbf{SCORING (0.0 or 1.0)}
\begin{itemize}
  \item \textbf{1.0} All key claims are clearly supported; no contradictions.
  \item \textbf{0.0} Contradicted or refers to things not visible in the frame.
\end{itemize}

\textbf{RULES}
\begin{itemize}
  \item Judge ONLY this single image (no assumptions from outside the frame).
  \item If a relation like ``near the closet door'' is claimed, require both
        entities to be visible and the relation plausible in the image.
  \item Be conservative when evidence is unclear; do not infer beyond what is visible.
\end{itemize}

\textbf{OUTPUT (JSON only, no extra text)}
\begin{verbatim}
{"score": <float 0 - 1>}
\end{verbatim}

\textbf{DESCRIPTION}\\
"\{description\}"
\end{AIBox}
\caption{We use this template to prompt the teacher model to assign a binary score based on the relevance of the generated frame-level description and the visual content in the corresponding video frame.}
\label{fig:video-frame-acc}
\end{figure*}

\begin{figure*}[t]
\centering
\begin{AIBox}{Video segment-level accuracy}
\small
\smallskip
You will receive an ORDERED sequence of images (consecutive video frames) and a
short event/action description.

\textbf{TASK}\\
Score how well the description matches ONLY what is visually supported across the
entire frame sequence. Evaluate both the action semantics and the temporal extent
(start$\rightarrow$middle$\rightarrow$end). Assume that the provided frames correspond
to the mentioned timestamps and rely primarily on visible evidence and ordering
of frames.

\textbf{INPUTS}
\begin{itemize}
  \item \textbf{FRAMES}: An ordered list of frames F1..FN. Assume they are evenly spaced.
  \item \textbf{DESCRIPTION}: ``\{description\}''
\end{itemize}

\textbf{EVALUATION GUIDELINES}
\begin{itemize}
  \item \textbf{Visual Evidence}: Are the required entities/objects present? Is the claimed
        action (e.g., ``cutting the cake'') visibly happening?
  \item \textbf{Temporal Progression}: Do frames show a plausible sequence for that action
        (setup $\rightarrow$ interaction $\rightarrow$ outcome)? Look for state changes
        (e.g., knife touches cake, slices appear).
  \item \textbf{Extent \& Alignment}: Does the action substantially occur within the claimed
        start/end span? Small off-by-one frame or slight timing drift is minor; clear
        mismatches are penalized.
  \item \textbf{Consistency}: Penalize claims contradicted by any clear frame (e.g., no knife
        ever shown, or subject doing a different action).
  \item \textbf{Single-Sequence Scope}: Judge ONLY these frames; don’t assume anything
        outside the provided sequence.
\end{itemize}

\textbf{SCORING (0.0 or 1.0)}
\begin{itemize}
  \item \textbf{1.0 -- Fully supported}: the action is clearly visible and progresses as
        described, with onset/offset aligned within a small tolerance; no contradictions.
  \item \textbf{0.0 -- Contradicted}: action not happening, key objects missing, or description
        clearly mismatched.
\end{itemize}

\textbf{OUTPUT (JSON only, no extra text)}
\begin{verbatim}
{"score": <float 0 - 1>}
\end{verbatim}
\end{AIBox}
\caption{We use this template to prompt the teacher model to assign a binary score based on the relevance of the generated segment-level description and the visual content in the corresponding video segment.}
\label{fig:video-seg-acc}
\end{figure*}

\subsection{Verification and filtering}
Although the curation pipeline produces grounded reasoning traces, many rollouts remain unreliable, with only $\sim3.1\%$ being usable. To address this, \model is also used to score and filter rollouts, discarding samples whose best score falls below a threshold while still enforcing outcome accuracy. In practice, we use a threshold value of 0.7.  We further clean and standardize extracted 2D points and temporal phrases. This filtering ensures that the SFT dataset contains primarily high-quality, visually grounded, and semantically accurate reasoning examples.

\subsection{SFT and RL training data mixtures}
We plot the distributions of our final curated training data mixtures in Figure~\ref{fig:combined-data-mix}. The final set of SFT training samples numbers approximately 85K images and videos . During the RL stage, we use a much smaller subset of around 4.5K images and videos from the Video-R1-260K full split. Our curated datasets contains both video and image samples. The selected videos consist of diverse open-domain clips depicting everyday scenarios, aimed at improving temporal understanding and event reasoning. On the image side, we include general visual QA for basic perception, alongside more task-specific skills
such as interpreting scientific figures for quantitative reasoning, reading embedded text via OCR tasks and performing visual commonsense reasoning. Finally, we incorporate some spatial reasoning samples to help the model learn to reason not just about what is present, but where and how it is arranged.

\section{Adaptive verifier scoring functions}
The evaluation processes of the visual grounding accuracy of generated responses for images and videos differ slightly. For images, we observe that using a simple regex expression is sufficient to extract 2D points and their corresponding objects. In video reasoning traces, we prompt GPT-4o to extract 2D points as well as timestamps using the template in Figure~\ref{fig:video-scoring-prompt}. For each trace, the returned dictionary contains the following lists:
\begin{enumerate}
    \item spatiotemporal points with 2D point coordinates and relevant frames

    \item frame-level descriptions with relevant frames

    \item segment-level descriptions with the start and end frames.
\end{enumerate}
The generated spatiotemporal points are scored using the same process as images, where we describe how we use the visual grounding teacher models in Section 3. In the case of a frame-level description, we first encode its corresponding frame before querying the GLM-4.5V model to assign a binary score using the template in Figure~\ref{fig:video-frame-acc}. Similarly, we query the GLM-4.5V model to evaluate the visual-semantic accuracy between the encoded set of frames and the generated description using the template in Figure~\ref{fig:video-seg-acc}. Finally, we compute a final visual grounding accuracy score by first averaging the scores within each category before averaging over them.

\clearpage

\setcounter{theorem}{0}
\setcounter{definition}{0}
\setcounter{corollary}{0}
\setcounter{assumption}{0}
\setcounter{remark}{0}

\section{Theoretical analysis}\label{sect-theory}
To justify the intuition that combining complementary but potentially weak reward teachers may train stronger model with reinforcement finetuning, we provide a theoretical analysis on learning with multiple reward estimators. Inspired by \cite{zhao2024pareto}, we show that even noisy reward signals in aggregation can guide the policy towards global Pareto-optimal solutions.

Consider any prompt/input to the model, each possible answer/action $a \in \mathcal{A}$ can be evaluated by $m$ \textit{oracle} reward components
$R_1(a), \dots, R_m(a)$, estimating its quality from different aspects (e.g. grounding, reasoning, etc.). The multiple rewards can arbitrarily correlate with each other, forming a complementary set. It is possible that one answer/action is better than another in one reward, but worse in another. Following multi-objective optimization practice~\cite{miettinen1999nonlinear}, we define our learning goal with Pareto optimality:

\begin{definition}[$\delta$-Pareto Domination]
For $\delta > 0$, we say $a' \succ_\delta a$ if
$R_i(a') \ge R_i(a) + \delta$ for all $i=1,\dots,m$.
\end{definition}

\begin{definition}[$\delta$-Pareto Optimality]
For $\delta > 0$, the set of globally $\delta$-Pareto-optimal actions 
\[
P_\delta := \{a \in \mathcal{A}: \nexists a' \in \mathcal{A} \text{ s.t. } a' \succ_\delta a\}.
\]
\end{definition}

However, in practice we do not have access to the true rewards $R_i(a)$'s, but instead weak estimators $\hat R_i(a)$ which can be noisy, biased and correlated with each other. We assume the estimated reward follows the general form below:

\begin{assumption}
For any $a$, the estimated reward has the form 
\[
\hat R_i(a) = f(R_i(a)) + \varepsilon_i(a),
\]
where $f$ is a monotonically increasing function, and the stochastic term $\{\varepsilon_i(a)\}_{i=1}^m$ is independent and
sub-Gaussian with:
\[
\mathbb{E}[\varepsilon_i(a)] = \mu, \quad
\mathbb{E}[e^{\lambda (\varepsilon_i(a)-\mu)}] \le e^{\sigma^2 \lambda^2/2},
\ \forall \lambda \in \mathbb{R}.
\]
\end{assumption}

The $\sigma$-sub-Gaussian assumption is a generic form constraining the error, which is easily satisfied by Hoeffding's lemma when the rewards are bounded. 

In each RFT step, we optimize $\hat R(a) = \sum_{i=1}^m w_i \hat R_i(a)$. Denote $w_{\min} = \min_i w_i$ and $w_{\max} = \max_i w_i$. Even if we don't directly optimize the oracle rewards, we can show probability bound on the estimated $\delta$-Pareto optimality.

\begin{lemma}
\label{lem:pairwise}
Fix two actions $a,a'$, with $a' \succ_\delta a$.. Let $\delta_f := \inf_x [f(x+\delta) - f(x)] > 0$ be the margin in the nonlinear transformed space. Then we have
\[
\mathbb{P}\big(\hat R(a) \ge \hat R(a')\big)
\le
\exp\!\left(
- \frac{\delta_f^2}{4\sigma^2}
\, m \, \frac{w_{\min}^2}{w_{\max}^2}
\right).
\]
\end{lemma}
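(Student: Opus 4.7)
The plan is to rewrite the event $\{\hat R(a) \ge \hat R(a')\}$ as a one-sided tail event for a weighted sum of mean-zero sub-Gaussian noises, and then combine a Hoeffding-type bound with elementary weight inequalities. First I would decompose $\hat R(a') - \hat R(a) = (R(a') - R(a)) + Z$ where $Z := \sum_{i=1}^{m} w_i\bigl(\varepsilon_i(a) - \varepsilon_i(a')\bigr)$, so the event of interest becomes $\{Z \ge R(a') - R(a)\}$. Since $a' \succ_\delta a$ gives $R_i(a') - R_i(a) \ge \delta$ componentwise, summing with positive weights produces $R(a') - R(a) \ge \delta \sum_i w_i$, and therefore $\mathbb{P}(\hat R(a) \ge \hat R(a')) \le \mathbb{P}(Z \ge \delta \sum_i w_i)$.

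The next step is to quantify the sub-Gaussian parameter of $Z$. Treating the noise vectors for $a$ and $a'$ as independent (as is natural if each rollout draws a fresh noise realization), and using the within-action independence stated in the assumption, each $w_i \varepsilon_i(\cdot)$ is $w_i \sigma$-sub-Gaussian, so by the standard convolution rule the sub-Gaussian parameters add in quadrature and $Z$ is sub-Gaussian with parameter $\sigma\sqrt{2\sum_i w_i^2}$. The usual Chernoff/MGF argument then yields
\[
\mathbb{P}\bigl(Z \ge t\bigr) \;\le\; \exp\!\left(-\frac{t^2}{4\sigma^2 \sum_i w_i^2}\right),
\]
for every $t>0$. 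Plugging in $t = \delta \sum_i w_i$ gives the exponent $-\delta^2 (\sum_i w_i)^2 / \bigl(4\sigma^2 \sum_i w_i^2\bigr)$.

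The final step is a purely algebraic lower bound on $(\sum_i w_i)^2 / \sum_i w_i^2$. Using $\sum_i w_i \ge m\,w_{\min}$ and $\sum_i w_i^2 \le m\,w_{\max}^2$, one obtains
\[
\frac{(\sum_i w_i)^2}{\sum_i w_i^2} \;\ge\; \frac{m^2 w_{\min}^2}{m\, w_{\max}^2} \;=\; m\,\frac{w_{\min}^2}{w_{\max}^2}.
\]
Substituting this into the exponent yields exactly the claimed bound $\exp\!\bigl(-\tfrac{\delta^2}{4\sigma^2}\, m\, \tfrac{w_{\min}^2}{w_{\max}^2}\bigr)$.

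There is no real obstacle here; the only subtlety worth flagging is the identification of the correct sub-Gaussian parameter of $Z$. The factor of $2$ inside the square root — which ultimately produces the constant $4$ in the exponent — relies on cross-action independence of the $\varepsilon_i(\cdot)$. Without that, one would have to fall back to the (weaker) triangle inequality for sub-Gaussian variables, which would replace $\sqrt{2}$ with $2$ and degrade the constant from $4$ to $8$. Under the natural interpretation of the noise model this independence is automatic, and the rest of the argument is routine.
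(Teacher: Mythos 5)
Your proof is correct and follows essentially the same route as the paper's: decompose $\hat R(a')-\hat R(a)$ into the deterministic gap plus a weighted noise difference $Z$, bound the gap below via $\delta$-domination, and apply a Chernoff/sub-Gaussian tail bound to $Z$ (the paper bounds $\sum_i w_i$ and the MGF by $w_{\min}$ and $w_{\max}$ inline, whereas you keep $\sum_i w_i$ and $\sum_i w_i^2$ exact and only pass to the $w_{\min}/w_{\max}$ bound at the end, which is a slightly tighter intermediate statement but the same final result). Your remark about cross-action independence of the noise is apt --- the paper's proof also factorizes the MGF over both $\varepsilon_i(a)$ and $\varepsilon_i(a')$ and hence relies on the same implicit assumption.
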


\begin{proof}
If $a' \succ_\delta a$, then $R_i(a') - R_i(a) \ge \delta$. By monotonicity, the deterministic difference is:
\[
\sum_{i=1}^m w_i (f(R_i(a')) - f(R_i(a))) \ge \delta_f \sum_i w_i \ge \delta_f m w_{\min}.
\]
Let $Z = \sum_i w_i(\varepsilon_i(a')-\varepsilon_i(a))$. The expectation
\begin{align}
    \mathbb{E}[Z] &= \sum_i w_i(\mathbb{E}[\varepsilon_i(a')]-\mathbb{E}[\varepsilon_i(a)]) \\
    &= \sum_i w_i(\mu-\mu]) = 0
\end{align}
Using the sub-Gaussian property, we have $\forall \lambda \in \mathbb{R}$,
\begin{align*}
    \mathbb{E}[e^{\lambda Z}] &= \mathbb{E}[e^{\lambda \sum_i w_i(\varepsilon_i(a')-\varepsilon_i(a))}] \\
&=  \prod_i^m  \mathbb{E}[e^{\lambda w_i(\varepsilon_i(a)-\mu)}] \mathbb{E}[e^{\lambda w_i(\varepsilon_i(a')-\mu)}] \\
&\le e^{m w_{\max}^2 \sigma^2 \lambda^2}.
\end{align*}
Thus $Z$ is sub-Gaussian with proxy variance bounded by
$2m\sigma^2 w_{\max}^2$, and we have
\begin{align*}
    \mathbb{P}\big(\hat R(a) \ge \hat R(a')\big) &= \mathbb{P}\big(\hat R(a) - \hat R(a') \ge 0\big) \\
    &\le \mathbb{P}\big(Z \le -\delta_f m w_{\min}\big) \\
    &\le \exp\!\left( - \frac{(\delta_f m w_{\min})^2}{2 (2 m w_{\max}^2 \sigma^2)} \right) = \exp\!\left( - \frac{\delta_f^2}{4\sigma^2} \cdot m \cdot \frac{w_{\min}^2}{w_{\max}^2} \right).
\end{align*}
\end{proof}

Next, we study optimization of $\hat R(a)$ over a finite sample group $\mathbb{G} = \{a_1, \dots, a_n\}$ drawn from policy $\pi$.

\begin{lemma}[Batch-Level Pareto Preservation]
\label{lem:batch}
Let $\mathbb{G}$ be a group of $n$ actions and $\hat a = \arg\max_{a \in \mathbb{G}} \hat R(a)$. Then:
\[
\mathbb{P}\big(\exists a' \in \mathbb{G} \text{ s.t. } a' \succ_\delta \hat a\big) \le (n-1)\exp(-C \cdot m),
\]
where $C := \frac{\delta_f^2}{4\sigma^2} \frac{w_{\min}^2}{w_{\max}^2}$.
\end{lemma}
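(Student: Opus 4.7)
The plan is to combine Lemma~\ref{lem:pairwise} with the disjoint decomposition of $E$ according to the identity of the argmax $\hat a$, and to exploit the fact that the $\delta$-Pareto-dominated subset of $\mathbb{G}$ contains at most $n-1$ elements. Writing $E = \bigsqcup_{i=1}^{n}(\{\hat a=a_i\}\cap E)$, I only need to bound those terms where $a_i$ is actually $\delta$-dominated inside $\mathbb{G}$, because on $\{\hat a=a_i\}\cap E$ the dominator must itself lie in $\mathbb{G}$.

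First I would verify that $P_\delta(\mathbb{G})\neq\emptyset$: the relation $\succ_\delta$ is a strict partial order on $\mathcal{A}$ (irreflexive because $\delta>0$, transitive because $\delta$-gaps add, and antisymmetric since otherwise summing the two defining inequalities gives $0\ge 2\delta$), so a concrete Pareto-optimal element is $a^\star\in\arg\max_{a\in\mathbb{G}}\sum_i R_i(a)$: any $a'\succ_\delta a^\star$ would force $\sum_i R_i(a')\ge\sum_i R_i(a^\star)+m\delta$, contradicting maximality. Consequently the set $D\subseteq\mathbb{G}$ of $\delta$-dominated elements satisfies $|D|\le n-1$.

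Next, for every $a_i\in D$ I would fix in advance (as a function of $\mathbb{G}$ alone, not of the noise) a single dominator $a_{j(i)}\in\mathbb{G}$ with $a_{j(i)}\succ_\delta a_i$. Since $\hat a$ is the $\hat R$-argmax over $\mathbb{G}$,
\[
\{\hat a=a_i\}\;\subseteq\;\{\hat R(a_i)\ge \hat R(a_{j(i)})\},
\]
and Lemma~\ref{lem:pairwise} applied to the fixed pair $(a_i,a_{j(i)})$ bounds the right-hand side by $\exp\!\left(-\tfrac{\delta^2}{4\sigma^2}\,m\,\tfrac{w_{\min}^2}{w_{\max}^2}\right)$. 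Summing the disjoint events $\{\hat a=a_i\}$ over the at most $n-1$ indices in $D$ then yields the claimed prefactor.

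The main obstacle is resisting the naive pair-based union bound, which enumerates every $(i,j)$ with $a_j\succ_\delta a_i$ and inflates the prefactor to $\binom{n}{2}$. The saving relies on two ingredients in combination: (a) committing to a single deterministic dominator per dominated element rather than all of them, and (b) using disjointness of $\{\hat a=a_i\}$ so the outer sum has at most $n-1$ terms instead of $O(n^2)$ pairs. A subtle point I would track carefully is that $a_{j(i)}$ must be measurable with respect to $\mathbb{G}$ alone; otherwise Lemma~\ref{lem:pairwise}, whose proof needs both endpoints fixed before the sub-Gaussian tail bound is applied, cannot be invoked.
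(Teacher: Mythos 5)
Your proof is correct, and it rests on the same two ingredients as the paper's: Lemma~\ref{lem:pairwise} plus a union bound with an $(n-1)$ prefactor. The difference is in how the union bound is organized. The paper's one-line proof treats $\hat a$ as if it were a fixed action and unions over ``the $n-1$ other samples,'' which is informal because $\hat a$ is noise-dependent and Lemma~\ref{lem:pairwise} requires both endpoints to be fixed before the sub-Gaussian tail is invoked; read literally, the honest version of that route unions over all dominating ordered pairs and gives a $\binom{n}{2}$ prefactor. Your decomposition by the identity of $\hat a$, together with committing to one deterministic dominator $a_{j(i)}$ per dominated element and the observation that the $\arg\max$ of $\sum_i R_i$ over $\mathbb{G}$ is never $\delta$-dominated (so $|D|\le n-1$), recovers the stated $(n-1)$ prefactor rigorously. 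In short, you prove the same bound by a cleaner route that closes the measurability gap the paper glosses over; the only caveat is that your argument, like the paper's, ultimately leans on Lemma~\ref{lem:pairwise}, whose own proof implicitly assumes independence of the noise vectors across the two actions in each pair, an assumption not literally stated in Assumption~1.
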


\begin{proof}
By Lemma~\ref{lem:pairwise}, each pair $(\hat a, a')$ for $a' \ne a$ violates the
ordering with probability less than $\exp\!\left(
- \frac{\delta_f^2}{4\sigma^2}
\, m \, \frac{w_{\min}^2}{w_{\max}^2}
\right)$. Simply applying a union bound
over the $n-1$ other samples yields the probability bound
\[
(n-1)\exp\!\left(
- \frac{\delta_f^2}{4\sigma^2}
\, m \, \frac{w_{\min}^2}{w_{\max}^2}
\right) = (n-1)\exp(-C \cdot m).
\]
\end{proof}

With the above lemmas, we can guarantee global $\delta$-Pareto optimality over the entire possible action space.

\begin{theorem}[Global Pareto Guarantee]
\label{thm:global}
Let $\pi$ be the sampling policy with probability coverage $\beta = \pi(P_\delta)$. For a group $\mathbb{G}$ of $n$ i.i.d. samples, the selected action $\hat a = \arg\max_{a \in \mathbb{G}} \hat R(a)$ satisfies:
\[
\mathbb{P}(\hat a \in P_\delta) \ge \big(1-(1-\beta)^n\big) \left[ 1 - (n-1)e^{-C \cdot m} \right].
\]
\end{theorem}

\begin{proof}
With probability $1-(1-\beta)^n$, the batch contains at least one element
of $P_\delta$.  Conditioned on that event, Lemma~\ref{lem:batch} bounds the probability that the contained optimal action is not selected.  Multiplying the two probabilities yields the stated bound.
\end{proof}

\noindent The theorem shows that as the number of rewards $m$ increases, we can approximate global Pareto optimality even with weak estimators.

\begin{corollary}[Additional Correctness Reward]
\label{cor:gated-pareto}
Let $R_0(a)$ denote a dominant outcome \emph{correctness} reward and $R_1(a),\dots,R_m(a)$ the reasoning rewards. For a threshold $\tau$ and margin $\gamma > 0$, define the gated scalarization:
\[
\hat R_\tau(a) =
\begin{cases}
\hat R_0(a), & \hat R_0(a) < \tau,\\[3pt]
w_0 \hat R_0(a) + \sum_{i=1}^m w_i \hat R_i(a), & \hat R_0(a) \ge \tau,
\end{cases}
\]
where all $w_i > 0$. Assume that for all actions, the expected estimated correctness reward $E[\hat R_0(a)] = f(R_0(a)) + \mu$ satisfies a gap condition: $|(f(R_0(a)) + \mu) - \tau| > \gamma$.

Let $P_{\delta,\tau}$ be the set of actions that are (i) correctly gated ($f(R_0(a)) + \mu \ge \tau + \gamma$) and (ii) $\delta$-Pareto-optimal in the reasoning rewards. Then, when optimizing $\hat R_\tau(a)$ over a batch $\mathbb{G}$ of size $n$:
\[
\mathbb{P}(\hat a \in P_{\delta,\tau})
\ge
\big(1-(1-\beta)^n\big)
\big(1 - 2n e^{-\frac{\gamma^2}{2\sigma^2}}\big)
\!\left[
1 - \frac{n-1}{e^{C \cdot (m+1)}}
\right],
\]
where $\beta = \pi(P_{\delta,\tau})$ and $C$ is defined as in Lemma~\ref{lem:pairwise} but for $m+1$ dimensions.
\end{corollary}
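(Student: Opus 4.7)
My plan is to decompose the success event into three pieces that line up with the three factors of the bound, and to reuse the concentration arguments of Lemmas~\ref{lem:pairwise}--\ref{lem:batch} on an augmented $(m+1)$-dimensional reward vector that folds the outcome in as an additional coordinate.

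I would first set $A=\{\mathbb{G}\cap P_{\delta,\tau}\neq\emptyset\}$; since $\mathbb{G}$ is i.i.d.\ from $\pi$ with $\pi(P_{\delta,\tau})=\beta$, this gives $\mathbb{P}(A)\ge 1-(1-\beta)^n$, the first factor. Next I would define $B$ as the event that the gating decision is correct for every $a\in\mathbb{G}$: $R_0(a)\ge\tau+\gamma$ implies $\hat R_0(a)\ge\tau$, and $R_0(a)\le\tau-\gamma$ implies $\hat R_0(a)<\tau$. The $\gamma$-separation assumption forces each failure to satisfy a one-sided tail $\varepsilon_0(a)\le-\gamma$ or $\varepsilon_0(a)\ge\gamma$, so the sub-Gaussian bound together with a union bound over the $n$ elements yields $\mathbb{P}(\neg B)\le n\,e^{-\gamma^{2}/(2\sigma^{2})}$, matching the second factor. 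On $B$, an incorrect action has $\hat R_\tau=\hat R_0<\tau$ while any correct action retains the full reasoning contribution above the gate, so the argmax $\hat a$ is necessarily correct, provided the weights and reward scales are such that no reasoning bonus can lift an incorrect action above a correct one.

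For the third factor I would apply Lemma~\ref{lem:pairwise} to the augmented vector $(R_0,R_1,\dots,R_m)$ with weights $(w_0,\dots,w_m)$. The extra noise term $w_0\varepsilon_0$ is independent and $\sigma$-sub-Gaussian, so the moment generating bound inside the proof of Lemma~\ref{lem:pairwise} picks up one additional factor and the exponent in the pairwise comparison becomes $C(m+1)$ with the same constant $C=\delta^{2} w_{\min}^{2}/(4\sigma^{2} w_{\max}^{2})$. Union-bounding over the $n-1$ competitors of $\hat a$ in the batch, exactly as in Lemma~\ref{lem:batch}, produces $\mathbb{P}(\neg C)\le(n-1)/e^{C(m+1)}$, where $C$ is the batch-Pareto-preservation event in the augmented vector. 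On $A\cap B\cap C$, the witness $a^{\star}\in\mathbb{G}\cap P_{\delta,\tau}$ certifies that the batch meets $P_{\delta,\tau}$, event $B$ makes $\hat a$ correct, and $C$ prevents $a^{\star}$ from $\delta$-dominating $\hat a$ in the augmented vector; the same lifting move used in the proof of Theorem~\ref{thm:global} then places $\hat a$ in $P_{\delta,\tau}$. Multiplying the three probability bounds---independent at the tail level because $A$ depends only on sampling, $B$ only on $\varepsilon_{0}$, and $C$ on the remaining noise coordinates---gives the claimed inequality.

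The main obstacle is not the concentration but the final lifting step: one must argue that batch-level Pareto preservation in the $(m+1)$-vector, together with the presence of a globally reasoning-Pareto-optimal witness $a^{\star}$ in the batch, forces $\hat a\in P_{\delta,\tau}$ rather than merely placing $\hat a$ in the batch Pareto set. The case $\hat a=a^{\star}$ is immediate; the remaining case---where $\hat a$ is a distinct correct action whose estimated scalarized reward exceeds that of $a^{\star}$---requires either an explicit weight/scale condition excluding correct non-Pareto actions from outscoring members of $P_{\delta,\tau}$, or a sharper pairwise bound that exploits the outcome gap $\gamma$ in the $R_{0}$ coordinate to force $\hat a=a^{\star}$ with high probability. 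Once this case is addressed, the rest of the derivation is a mechanical repetition of the proofs of Lemmas~\ref{lem:pairwise}--\ref{lem:batch} with $m$ replaced by $m+1$.
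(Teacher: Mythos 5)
Your proposal follows essentially the same route as the paper's proof: the same three-factor decomposition into batch coverage of $P_{\delta,\tau}$, correct gating via the sub-Gaussian tail on $\varepsilon_0$ with a union bound over $n$ actions, and Pareto preservation via Lemma~\ref{lem:batch} applied to the augmented $(m+1)$-dimensional reward. The ``lifting'' concern you flag at the end (that non-domination within the batch does not by itself place $\hat a$ in the globally Pareto-optimal set) is a real looseness, but the paper's own proof simply asserts this step without addressing it, so your attempt matches the paper's argument in both structure and level of rigor.
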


\begin{proof}
We perform similar analysis by conditioning the joint events. The probability that $\mathbb{G}$ contains at least one action $a^* \in P_{\delta,\tau}$ is $1-(1-\beta)^n$. 

We then bound the gating error. An action is misclassified if $\hat R_0(a) \ge \tau$ while $f(R_0(a)) + \mu < \tau$, or vice versa. Given the gap $\gamma$, this requires $|\varepsilon_0(a) - \mu| > \gamma$. By the $\sigma$-sub-Gaussian assumption:
\[
\mathbb{P}(|\varepsilon_0(a) - \mu| > \gamma) \le 2e^{-\frac{\gamma^2}{2\sigma^2}}.
\]
Applying a union bound over $n$ actions, the probability that all actions are correctly gated is at least $1 - 2n e^{-\gamma^2/2\sigma^2}$. 

Conditioned on correct gating, the optimization for high-correctness actions involves $m+1$ rewards (including $R_0$). Since $f$ is monotonic and the bias $\mu$ cancels in pairwise comparisons (as shown in Lemma~\ref{lem:pairwise}), the ranking of the weighted sum $\hat R(a)$ preserves the Pareto ordering. The probability that the estimated optimizer $\hat a$ is indeed the Pareto-optimal choice follows from Lemma~\ref{lem:batch} adapted for $m+1$ estimators:
\[
1 - (n-1)e^{-C(m+1)}.
\]
Combining these independent components yields the stated lower bound.
\end{proof}

\begin{remark}[GRPO Training]
\label{rem:grpo}
In Group Relative Policy Optimization (GRPO), the advantage is computed as $A_i = (\hat R_i - \text{mean}(\mathbb{G}))/\text{std}(\mathbb{G})$. Since the mean and standard deviation are constants for a fixed batch $\mathbb{G}$, this transformation is strictly monotonic. Consequently, $\arg\max A_i = \arg\max \hat R_i$. The Pareto-optimality guarantees established for aggregated rewards apply directly to GRPO, ensuring that Pareto-optimal responses receive the highest relative reinforcement.
\end{remark}

\begin{remark}[Optimal Sample Size]
\label{rem:optimal-batch}
The lower bound in Theorem~\ref{thm:global},
\[
H(n)=\underbrace{\big(1-(1-\beta)^n\big)}_{\text{exploration}} \cdot 
\underbrace{\left[1-\frac{n-1}{e^{C m}}\right]}_{\text{exploitation}}
\]
shows an inherent exploration--exploitation tradeoff. Larger $n$ increases the probability of sampling an optimal action (exploration) but also increases the probability that noise causes a misranking (exploitation error). 

By setting the derivative $H'(n) \approx 0$, we find the approximate optimal batch size:
\[
n^\star \approx \frac{1}{\beta}\log\!\big(1+\beta e^{C m}\big).
\]
This implies that when the model is weak ($\beta$ is small), a larger batch size $n$ is required to ensure $P_\delta$ is sampled (exploration). As the model improves ($\beta$ increases), $n$ can be reduced to minimize ranking errors (exploitation).
\end{remark}

\section{Qualitative visualizations}
In this section, we provide some qualitative visualizations of our curated SFT reasoning traces. For image samples, we highlight any generated 2D points in the reasoning traces and also overlay them as red circles onto the images for visibility. We provide both original and the overlaid images for comparison but we note that we do not actually use the overlaid images during the SFT stage.

\begin{figure*}[t]
\centering
\begin{AIBox}{Image Example 1}
\begin{minipage}{0.48\textwidth}
    \centering
    \includegraphics[width=0.9\textwidth]{}
\end{minipage}
\hfill
\begin{minipage}{0.48\textwidth}
    \centering
    \includegraphics[width=0.9\textwidth]{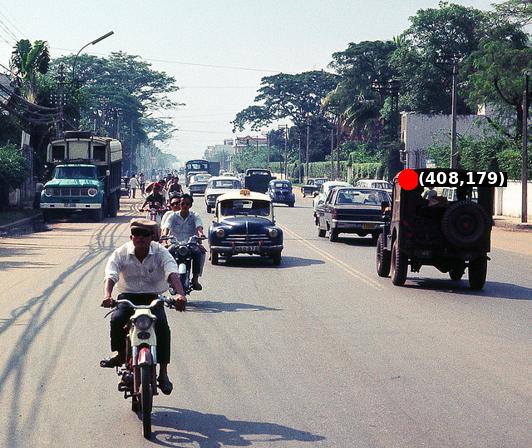}
\end{minipage}

\vspace{1em}

\small
\textbf{Question:}

\begin{lstlisting}[breaklines=true, basicstyle=\small\ttfamily, breakindent=0pt, escapechar=@]
Is there a vehicle with a spare tire visible on its exterior? A. No. B. Yes.
\end{lstlisting}

\vspace{0.3em}
\textbf{Response:}

\begin{lstlisting}[breaklines=true, basicstyle=\small\ttfamily, breakindent=0pt, escapechar=@]
<think>Let me analyze the image carefully while identifying the vehicle with a spare tire.

Scanning the scene, I need to look for vehicles that show a spare tire on their exterior. I'll move systematically through the image looking at each vehicle.

I notice a dark-colored jeep or utility vehicle on the right side of the image, which appears to have a spare tire mounted on its exterior. Let me focus on the exact location of this vehicle.

Looking at the right side of the image, I can see a vehicle that appears to be a jeep or utility vehicle. Upon closer inspection, I can confirm there's a spare tire mounted on the back of this vehicle, which is clearly visible. This appears to be the vehicle with the spare tire on its exterior.
\end{lstlisting}
\end{AIBox}
\caption{Image example 1}
\label{fig:annotated_a2b63f5f-7497-5d63-bcf0-184e412b4120_part1}
\end{figure*}

\begin{figure*}[t]
\centering
\begin{AIBox}{Image Example 1 (continued)}
\small
\textbf{Response (continued):}

\begin{lstlisting}[breaklines=true, basicstyle=\small\ttfamily, breakindent=0pt, escapechar=@]
I observe a jeep type vehicle at approximately @\textcolor{red}{<points x1="408" y1="179" alt="jeep or utility vehicle">jeep or utility vehicle</points>}@ with a spare tire visible at its rear. The spare tire appears to be mounted on the exterior of this vehicle, making it clearly visible.

Therefore, the answer should be B. Yes. There is a vehicle with a spare tire visible on its exterior.</think><answer>I examine the scene systematically from left to right. As I scan the right side of the image toward the middle-right area, I notice a dark-colored utility vehicle that appears to be a jeep type. On its exterior, I can clearly see a spare tire mounted on the back of the vehicle. This spare tire is visible and prominently displayed on the side of the vehicle's exterior structure. The vehicle is positioned near the right edge of the scene, traveling on the right side of the road. I confirm this is indeed a vehicle with a spare tire visible on its exterior.

<|begin_of_box|>B. Yes.<|end_of_box|>
\end{lstlisting}
\end{AIBox}
\caption{Image example 1 (continued)}
\label{fig:annotated_a2b63f5f-7497-5d63-bcf0-184e412b4120_part2}
\end{figure*}

\begin{figure*}[t]
\centering
\begin{AIBox}{Image Example 2 (Part 1)}
\begin{minipage}{0.48\textwidth}
    \centering
    \includegraphics[width=\textwidth]{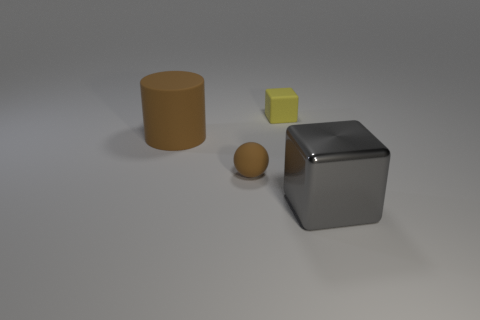}
\end{minipage}
\hfill
\begin{minipage}{0.48\textwidth}
    \centering
    \includegraphics[width=\textwidth]{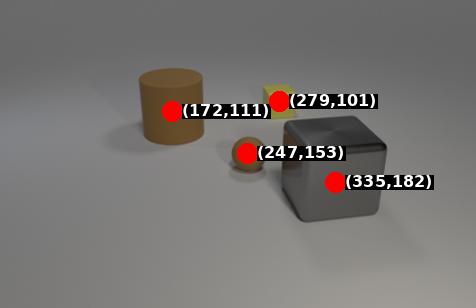}
\end{minipage}

\vspace{1em}

\small
\textbf{Question:}

\begin{lstlisting}[breaklines=true, basicstyle=\small\ttfamily, breakindent=0pt, escapechar=@]
How many objects are there in total?
\end{lstlisting}

\vspace{0.3em}
\textbf{Response:}

\begin{lstlisting}[breaklines=true, basicstyle=\small\ttfamily, breakindent=0pt, escapechar=@]
<think>Looking at the original image, I need to count and identify all objects. Starting from the left side, I see a large brown cylinder positioned at around the left-center area. Moving rightward, there is a small brown sphere near the center of the image. Continuing further right, there is a gray metallic cube. At the top right area, I see a small yellow cube. 

To confirm with spatial awareness, I scan from left to right: the large brown cylinder is at @\textcolor{red}{<points x1="172" y1="111" alt="large brown cylinder">large brown cylinder</points>}@ area, the small brown sphere is positioned closer to the center around @\textcolor{red}{<points x2="247" y2="153" alt="small brown sphere">small brown sphere</points>}@, the large gray metallic cube is at @\textcolor{red}{<points x3="335" y3="182" alt="large gray metallic cube">large gray metallic cube</points>}@, and the small yellow cube is positioned at the top right area at about @\textcolor{red}{<points x4="279" y4="101" alt="small yellow cube">small yellow cube</points>}@.
\end{lstlisting}
\end{AIBox}
\caption{Image example 2 (part 1)}
\label{fig:annotated_ee16c240-ac4d-554d-9ad5-607de4b981d4_part1}
\end{figure*}

\begin{figure*}[t]
\centering
\begin{AIBox}{Image Example 2 (Part 2)}
\small
\textbf{Response (continued):}

\begin{lstlisting}[breaklines=true, basicstyle=\small\ttfamily, breakindent=0pt, escapechar=@]
Counting these objects, there are 4 distinct objects total. I observe a large brown cylinder at @\textcolor{red}{<points x1="172" y1="111" alt="large brown cylinder">large brown cylinder</points>}@, a small brown sphere at @\textcolor{red}{<points x2="247" y2="153" alt="small brown sphere">small brown sphere</points>}@, a large gray metallic cube at @\textcolor{red}{<points x3="335" y3="182" alt="large gray metallic cube">large gray metallic cube</points>}@, and a small yellow cube at @\textcolor{red}{<points x4="279" y4="101" alt="small yellow cube">small yellow cube</points>}@. Therefore, there are 4 objects in total.</think><answer>Looking at the scene, I observe a large brown cylinder at position @\textcolor{red}{<points x1="172" y1="111" alt="large brown cylinder">large brown cylinder</points>}@ on the left side. Moving rightward toward the center, I notice a small brown sphere at @\textcolor{red}{<points x2="247" y2="153" alt="small brown sphere">small brown sphere</points>}@. Continuing further right, there is a large gray metallic cube at position @\textcolor{red}{<points x3="335" y3="182" alt="large gray metallic cube">large gray metallic cube</points>}@. In the upper right area of the image, I see a small yellow cube at position @\textcolor{red}{<points x4="279" y4="101" alt="small yellow cube">small yellow cube</points>}@. Counting all these distinct objects, there are 4 objects in total.
<|begin_of_box|>4<|end_of_box|></answer>
\end{lstlisting}
\end{AIBox}
\caption{Image example 2 (part 2)}
\label{fig:annotated_ee16c240-ac4d-554d-9ad5-607de4b981d4_part2}
\end{figure*}

\begin{figure*}[t]
\centering
\begin{AIBox}{Image Example 3}
\begin{minipage}{0.48\textwidth}
    \centering
    \includegraphics[width=0.3\textwidth]{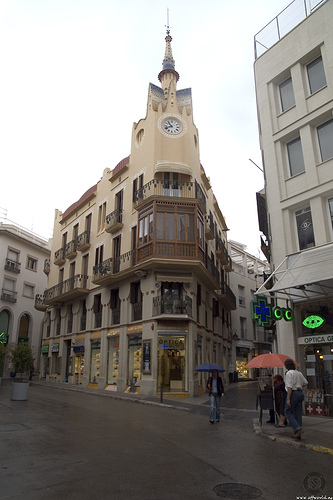}
\end{minipage}
\hfill
\begin{minipage}{0.48\textwidth}
    \centering
    \includegraphics[width=0.3\textwidth]{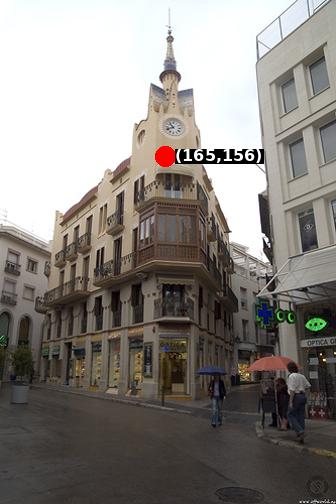}
\end{minipage}

\vspace{0.5em}

\small
\textbf{Question:}

\begin{lstlisting}[breaklines=true, basicstyle=\small\ttfamily, breakindent=0pt, escapechar=@]
What is at the top of this structure in the middle of the city square? A. church tower. B. advertisement. C. latitude. D. optiplex.
\end{lstlisting}

\vspace{0.3em}
\textbf{Response:}

\begin{lstlisting}[breaklines=true, basicstyle=\small\ttfamily, breakindent=0pt, escapechar=@]
<think>Let me analyze the image focusing on the structure in the middle of the city square. I need to identify what's at the top of this structure.

I observe a prominent building in the center of the image which appears to be the structure in question. Looking at the top of this building:
- There's a spire-like structure at the top
- There's a clock visible on what appears to be a tower
- The building has architectural features typical of a church tower or bell tower

Looking more carefully at the top of this central building specifically, I can see that there's a tall spire or tower element which has a distinctive shape typical of church architecture. The building appears to be historic with ornate architectural details around what would typically be a church tower or clock tower.

Now let me consider the answer choices:
A. church tower - This seems most likely given the architectural style with the spire, the historic nature of the building, and the prominent tower
B. advertisement - There doesn't appear to be an advertisement at the top, rather architectural elements
C. latitude - This is a directional coordinate measurement, not something you'd find at the top of a structure
D. optiplex - This appears to be related to computer equipment, not something that would be at the top of a building
\end{lstlisting}
\end{AIBox}
\caption{Image example 3 (part 1)}
\label{fig:annotated_1f82389b-dc45-5176-8278-3167dad5016e_part1}
\end{figure*}

\begin{figure*}[t]
\centering
\begin{AIBox}{Image Example 3 (cont.)}
\small
\textbf{Response (continued):}

\begin{lstlisting}[breaklines=true, basicstyle=\small\ttfamily, breakindent=0pt, escapechar=@]
Based on the visual evidence of the spire/tower structure with architectural details like what appears to be a clock, this fits the description of a church tower rather than an advertisement, latitude (which is not applicable), or optiplex (which is irrelevant).

I observe a distinctive spire and tower at the top of the central building around @\textcolor{red}{<points x1="165" y1="156" alt="church tower">church tower</points>}@ area, which has architectural characteristics typical of a church tower.</think><answer>Looking at the central building in the city square, I can see it has a distinctive tower structure. Examining the top portion of this central structure, I observe a prominent spire-like element at the very top, approximately at the middle-upper section of the beige building with the clock. The architectural style includes ornate details and the tower appears to be an elevated structure with a pointed or tapered top that's characteristic of religious or civic architecture. Given the building's historic appearance with its detailed facade, corner tower, and clock, the structure at the top is most likely a church tower. None of the other options (advertisement, latitude, or optiplex) would naturally be found as a prominent feature at the top of this architectural structure.

My answer is <|begin_of_box|>A<|end_of_box|>.</answer>
\end{lstlisting}
\end{AIBox}
\caption{Image example 3 (part 2)}
\label{fig:annotated_1f82389b-dc45-5176-8278-3167dad5016e_part2}
\end{figure*}

\begin{figure*}[t]
\centering
\begin{AIBox}{Image Example 4}
\begin{minipage}{0.48\textwidth}
    \centering
    \includegraphics[width=\textwidth]{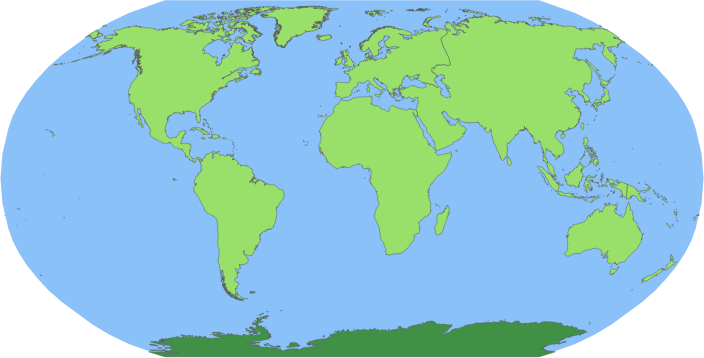}
\end{minipage}
\hfill
\begin{minipage}{0.48\textwidth}
    \centering
    \includegraphics[width=\textwidth]{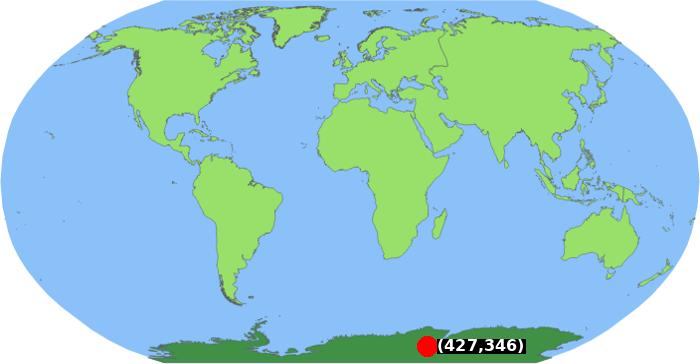}
\end{minipage}

\vspace{0.5em}

\small
\textbf{Question:}

\begin{lstlisting}[breaklines=true, basicstyle=\small\ttfamily, breakindent=0pt, escapechar=@]
Which continent is being indicated in the highlighted area? A. Australia. B. Antarctica. C. Africa.
\end{lstlisting}

\vspace{0.3em}
\textbf{Response:}

\begin{lstlisting}[breaklines=true, basicstyle=\small\ttfamily, breakindent=0pt, escapechar=@]
<think>Let me analyze the original image carefully. I'm looking for the highlighted area that indicates a specific continent.

Looking at the original image (first image), I can see a world map with blue oceans and green land masses. There's a distinct green landmass at the bottom of the image that appears to be completely surrounded by water. This is typically Antarctica on a world map.

Now I need to focus on that highlighted area specifically. Looking at the bottom portion of the original image, there is indeed a green landmass at the southernmost part of the map. It appears completely isolated by water, which is characteristic of Antarctica.

Looking at the second image (with coordinate overlays), I can see a coordinate at @\textcolor{red}{<points x1="427" y1="346" alt="green landmass at the southernmost part of the map">green landmass at the southernmost part of the map</points>}@ which appears to be pointing to the green area at the very bottom of the map, confirming that it is the southernmost continent.

Based on the position at the bottom of the world map, completely separated by water from all other continents, I can identify this as Antarctica.</think><answer>I examine the original world map, focusing on the lower portion where there is a distinct green landmass completely surrounded by water at the southernmost region of the map.
\end{lstlisting}
\end{AIBox}
\caption{Image example 4 (part 1)}
\label{fig:annotated_5df51f14-2f5e-56ee-a3bf-39aaa555d791_part1}
\end{figure*}

\begin{figure*}[t]
\centering
\begin{AIBox}{Image Example 4 (continued)}
\small
\textbf{Response (continued):}

\begin{lstlisting}[breaklines=true, basicstyle=\small\ttfamily, breakindent=0pt, escapechar=@]
Looking at this highlighted area, I note that it's positioned at the bottom of the map, separated by ocean from all other continents. This corresponds to the continent located at the South Pole. This geographic characteristic - isolated at the southernmost point of the map, completely surrounded by water - clearly indicates Antarctica. The distinctive positioning and separation from other landmasses confirms this is the proper identification.
<|begin_of_box|>B.<|end_of_box|></answer>
\end{lstlisting}
\end{AIBox}
\caption{Image example 4 (part 2)}
\label{fig:annotated_5df51f14-2f5e-56ee-a3bf-39aaa555d791_part2}
\end{figure*}

\begin{figure*}[t]
\centering
\begin{AIBox}{Video Example 1}
\begin{minipage}{\textwidth}
    \centering
    \includegraphics[width=\textwidth]{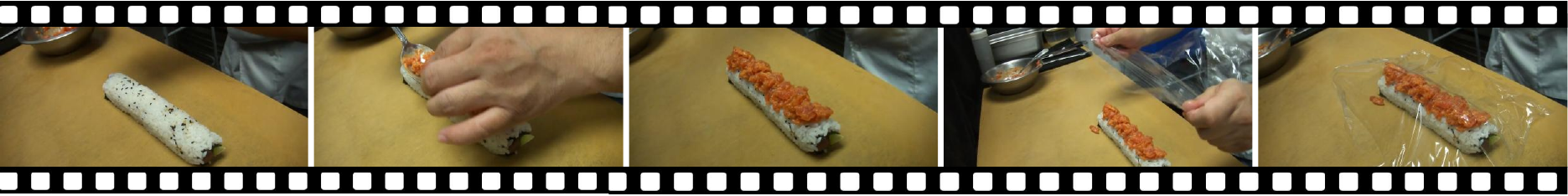}
\end{minipage}

\vspace{0.5em}

\small
\textbf{Question:}

\begin{lstlisting}[breaklines=true, basicstyle=\small\ttfamily, breakindent=0pt, escapechar=@]
What does the person do after topping the sushi roll with the orange fish mixture?
\end{lstlisting}

\vspace{0.3em}
\textbf{Response:}

\begin{lstlisting}[breaklines=true, basicstyle=\small\ttfamily, breakindent=0pt, escapechar=@]
<think>Got it, let's try to figure out what the person does after topping the sushi roll with the orange fish mixture. First, I need to look at the sequence of frames. The question is about what happens immediately after the topping step. 

Looking at the frames, after adding the orange fish mixture (which seems to be done in frames where the topping is visible), the next action would be wrapping it. Let's check the frames where the person is handling the sushi roll. After the topping, the person likely wraps it in plastic wrap or uses a bamboo mat. 

Looking at the frames, after the topping (which is done in frames like frame 2, frame 3, frame 4, frame 5, etc.), the next step would be wrapping with plastic wrap. Let's check frame 24 where the person is placing plastic wrap over the sushi roll. Wait, let's check the sequence. The person first tops with the orange mixture, then wraps in plastic wrap, then rolls with bamboo mat. So the action after topping would be wrapping the sushi roll in plastic wrap. Let me confirm the frames. After the topping is done (frames where the orange mixture is on top), the next action is wrapping in plastic wrap, then rolling with the bamboo mat. So the spatiotemporal reasoning would involve tracking the sushi roll's state. After being topped with the orange fish mixture, the person wraps the sushi roll in plastic wrap. Then uses the bamboo mat to roll it.
\end{lstlisting}
\end{AIBox}
\caption{Video example 1} 
\label{fig:video_example_1_part1}
\end{figure*}

\begin{figure*}[t]
\centering
\begin{AIBox}{Video Example 1 (continued)}
\small
\textbf{Response (continued):}

\begin{lstlisting}[breaklines=true, basicstyle=\small\ttfamily, breakindent=0pt, escapechar=@]
Now, to make sure I'm citing the correct frames. Let's check the frames where the plastic wrap is used. Frame 24 shows the person placing plastic wrap over the sushi roll, and then frames 25 to 27 show wrapping with plastic wrap. Then frames 28 to 30 show using the bamboo mat to roll the sushi. So the action after topping is wrapping the sushi roll in plastic wrap.</think><answer>To determine what the person does after topping the sushi roll with the orange fish mixture, we analyze the sequence of frames. After applying the orange fish mixture (visible in multiple frames, such as @\textcolor{red}{frame 2 at 0.81 seconds where the mixture is at <points x1="355" y1="189" alt="orange fish mixture">orange fish mixture</points>}@ and @\textcolor{red}{frame 5 at 3.25 seconds where it's at <points x2="339" y2="169" alt="orange fish mixture">orange fish mixture</points>)}@, the next action involves wrapping the sushi roll. In @\textcolor{red}{frame 24 at 18.71 seconds}@, the person places clear plastic wrap over the sushi roll (at coordinates around @\textcolor{red}{<points x3="342" y3="169" alt="plastic wrap">plastic wrap</points>}@), and subsequent frames from @\textcolor{red}{frame 25 at 19.52 seconds to frame 27}@ show the person continuing to wrap the sushi roll in plastic. This is followed by using a bamboo rolling mat from @\textcolor{red}{frame 28 at 21.15 seconds to frame 30 at 23.59 seconds}@ to shape the sushi. Therefore, after topping with the orange fish mixture, the person wraps the sushi roll in plastic wrap. The answer is <|begin_of_box|>Wraps the sushi roll in plastic wrap<|end_of_box|>.</answer>
\end{lstlisting}
\end{AIBox}
\caption{Video example 1 (continued)} 
\label{fig:video_example_1_part2}
\end{figure*}

\begin{figure*}[t]
\centering
\begin{AIBox}{Video Example 2}
\begin{minipage}{\textwidth}
    \centering
    \includegraphics[width=\textwidth]{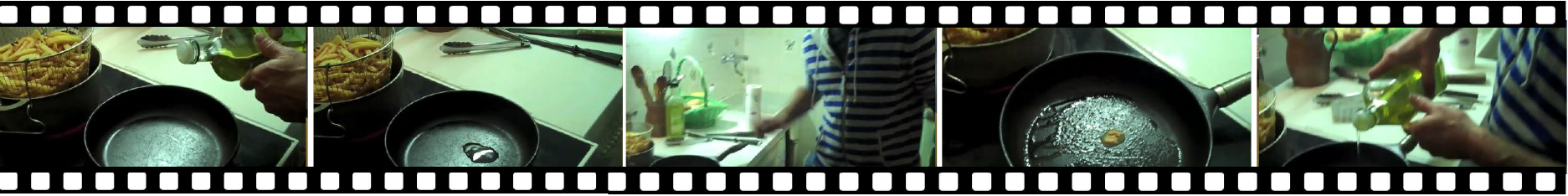}
\end{minipage}

\vspace{0.5em}

\small
\textbf{Question:}

\begin{lstlisting}[breaklines=true, basicstyle=\small\ttfamily, breakindent=0pt, escapechar=@]
How does the camera move when showing the frying pan on the stove? A. It tilts upwards. B. It pans to the left. C. It zooms in closer to the frying pan. D. It moves away from the frying pan.
\end{lstlisting}

\vspace{0.3em}
\textbf{Response:}

\begin{lstlisting}[breaklines=true, basicstyle=\small\ttfamily, breakindent=0pt, escapechar=@]
<think>Got it, let's analyze the camera movement when showing the frying pan. First, I need to check how the camera's perspective changes. Let's look at the frames with the frying pan. 

Looking at the sequence, when the camera focuses on the frying pan, does it zoom in? Let's check the frames. For example, when the oil is poured, the camera might zoom in closer. Wait, let's check the frames where the frying pan is the main focus. The camera seems to move closer, increasing the size of the frying pan in the frame. 

Let's check the coordinates. In earlier frames, the frying pan might be smaller, then as the camera zooms in, it becomes larger. For example, comparing @\textcolor{red}{frame 1 at 0.00 seconds and frame 4 at 2.53 seconds}@, the frying pan's size relative to the frame changes. If it's zooming in, the frying pan would appear larger, meaning the camera is moving closer. So, the camera zooms in closer to the frying pan. Let's confirm. The other options: tilting upwards would change the angle vertically, panning left would move the camera horizontally, moving away would make the pan smaller. The evidence shows the pan becomes larger in the frame, indicating zooming in. 
\end{lstlisting}
\end{AIBox}
\caption{Video example 2 (part 1)} 
\label{fig:video_example_2_part1}
\end{figure*}

\begin{figure*}[t]
\centering
\begin{AIBox}{Video Example 2 (cont.)}
\small
\textbf{Response (continued):}

\begin{lstlisting}[breaklines=true, basicstyle=\small\ttfamily, breakindent=0pt, escapechar=@]
Therefore, the camera moves by zooming in closer to the frying pan.</think><answer>To determine how the camera moves when showing the frying pan on the stove, we analyze the spatiotemporal changes. When focusing on the frying pan, the camera's perspective shifts such that the frying pan appears larger in successive frames. For example, comparing @\textcolor{red}{frame 1 at 0.00 seconds and frame 4 at 2.53 seconds}@, the frying pan at @\textcolor{red}{<points x1="356" y1="273" alt="frying pan">frying pan</points> in frame 1}@ appears smaller relative to the frame than the frying pan at @\textcolor{red}{<points x2="356" y2="262" alt="frying pan">frying pan</points> in frame 4}@. This indicates the camera is moving closer to the frying pan. Additionally, as the sequence progresses, the frying pan's size increases in the frame, confirming that the camera zooms in closer rather than tilting, panning, or moving away. Therefore, the camera moves by zooming in closer to the frying pan. <|begin_of_box|>C<|end_of_box|>.</answer>
\end{lstlisting}
\end{AIBox}
\caption{Video example 2 (part 2)} 
\label{fig:video_example_2_part2}
\end{figure*}

\end{document}